%% -*- latex-command: pdflatex -*- 
\documentclass[a4paper,12pt]{article}

\usepackage{amsmath,amssymb,amsfonts,amsthm,natbib}
\usepackage{vmargin}
\setpapersize{A4}

\usepackage[colorlinks=true,linkcolor=blue,citecolor=blue]{hyperref}

\newcommand{\e}{\mathrm{e}}
\newcommand{\inner}[2]{\left\langle #1, #2 \right\rangle}
\newcommand{\g}[1]{\boldsymbol{#1}}

\newcommand{\R}[0]{\mathbb{R}} 
\newcommand{\E}[0]{\mathbb{E}}

\newcommand{\I}[1]{\boldsymbol{1}_{#1}}
\renewcommand{\H}[0]{\mathcal{H}} 
\newcommand{\X}[0]{\mathcal{X}} 
\newcommand{\Y}[0]{\mathcal{Y}} 

\newcommand{\F}[0]{\mathcal{F}} 
\newcommand{\G}[0]{\mathcal{G}} 
\renewcommand{\L}[0]{\mathcal{L}} 
\newcommand{\Z}[0]{\mathcal{Z}}

\newcommand{\A}[0]{\mathcal{A}} 

\newcommand{\sign}[0]{\mbox{sign}} 
\newtheorem{theorem}{Theorem}
\newtheorem{corollary}{Corollary}
\newtheorem{lemma}{Lemma}

\newtheorem{definition}{Definition}

\newtheorem{example}{Example}

\newcommand{\argmin}{\operatornamewithlimits{argmin}}

\sloppy

\begin{document}

\title{\bf Uniform Risk Bounds for Learning\\with Dependent Data Sequences}

\author{Fabien Lauer\bigskip\\LORIA, Université de Lorraine, CNRS, France}
\date{}

\maketitle

\abstract{This paper extends standard results from learning theory with independent data to sequences of dependent data. Contrary to most of the literature, we do not rely on mixing arguments or sequential measures of complexity and  derive uniform risk bounds with classical proof patterns and capacity measures. In particular, we show that the standard classification risk bounds based on the VC-dimension hold in the exact same form for dependent data, and further provide Rademacher complexity-based bounds, that remain unchanged compared to the standard results for the identically and independently distributed case.  
Finally, we show how to apply these results in the context of scenario-based optimization in order to compute the sample complexity of random programs with dependent constraints.  
}

\section{Introduction}
\label{sec:intro}

Statistical learning theory offers probabilistic guarantees on the accuracy of models learned from data. Most of these results assume that the data come from a realization of a sample of independent and identically distributed (i.i.d.) random variables, which allows one to build the theory upon standard concentration arguments. 
However, this assumption is often unrealistic as dependent data are ubiquitous in real-world applications, such as signal processing, speech recognition, biological sequence annotation \citep{Baldi01}, dynamical system identification \citep{Ljung87}, or even handwritten character recognition where the images collected for training come from a string of letters forming a meaningful text. 

This paper extends several classical results to sequences of dependent data, such as risk bounds based on the Vapnik-Chervonenkis (VC) dimension or the Rademacher complexity. In particular, we focus on {\em uniform} risk bounds that are more suitable for nonconvex loss functions difficult to minimize in practice. 

As a motivating application, we also consider the consequences of these results in the framework of scenario-based optimization for solving uncertain optimization problems. Here, robust solutions are those that typically satisfy an infinite number of constraints: one for each value of the uncertain parameter of the problem. Scenario-based optimization computes instead probably approximately correct solutions by sampling the set of uncertainties and solving the problem with a finite number of constraints. In this context, the computing complexity is directly related to the number of sampled constraints and it is thus of primary importance to compute the sample complexity of the corresponding random program, i.e., the smallest sample size for which we can guarantee with a high confidence that the probability of violation of the constraint is low. 

\paragraph{Related work} 
Dependence between training instances typically occur for machine learning in the context of ranking problems, where the training algorithms deal with overlapping pairs of input data. In this setting, prior work on generalization bounds \citep{Usunier06,Ralaivola15} relied on a decomposition of the training sample into independent subsamples for which concentration could be applied by following the graph-coloring scheme of \cite{Jansen04}. 

The literature also contains numerous results that do not assume that such a decomposition is possible, e.g., when a single training instance depends on all the others as is common when handling time-series or sequential data. In that case, the general approach is to measure the degree of dependence between the data points with a mixing coefficient \citep{Bradley05} and assume that this coefficient tends to zero sufficiently quickly to allow for the derivation of meaningful bounds. Works in this line include that of \cite{Meir00,Steinwart09,Mohri09,Mohri10} and rely on technical arguments inherited from \cite{Yu94}. Though the obtained risk bounds share most of their structure with their counterpart for i.i.d. data, they also involve the mixing coefficient, which slightly degrades the convergence and remains difficult to determine or estimate in practice. Note that a connection between the mixing and graph-coloring arguments is discussed in \cite{Ralaivola10}.  

Other approaches based on Rademacher complexities include that of \cite{Rahklin15}, which can also deal with non-stationary sequences \citep{Kuznetsov15}, but involves complex computations with tree processes and decoupling techniques from \cite{delaPena99}. Also, these works consider a different form of the risk (a conditional forecasting risk) and rely on sequential counterparts of standard capacity measures.   

A more recent line of research developped by \cite{Simchowitz18,Faradonbeh18} relies on techniques from \cite{Mendelson14,Mendelson18} to bypass the need for mixing arguments. However, these results apply only to the empirical risk minimizer (the orthogonal least-squares estimator is considered in \cite{Simchowitz18,Faradonbeh18}). Thus, they do not provide uniform risk bounds that apply to any model in a given class, which is critical for applications with nonconvex loss functions where the empirical risk minimizer remains elusive (such as unsupervised learning or hybrid dynamical system identification \citep{Lauer19}).

Regarding our motivating application, i.e., scenario-based optimization, we can distinguish two main lines of research. The first, developed for instance in \cite{Alamo09,Lauer23}, builds upon learning theory to derive bounds on the probability of violation and sample complexities. The second, pioneered by \cite{Campi08,Calafiore10}, relies instead on convex analysis arguments that lead to tighter bounds for the specific case of convex optimization. A number of works also tried to extend the latter to various forms of nonconvex programs \citep{Esfahani14,Campi18}. However, all these works, either based on learning theory or convex analysis, assume that the scenarios are sampled independently, which could be problematic for certain applications, as, e.g., the one of \cite{Wang21}.

\paragraph{Contribution}
We show that many standard results from learning theory, such as classification VC bounds and Rademacher complexity-based bounds, apply {\em without any modification} to dependent data. In addition, we derive these results with proofs that follow the standard patterns and rely on the classical capacity measures, which stands in contrast to the approach of \cite{Rahklin15} which requires additional arguments and more complex computations with sequential complexities. 
In comparison with other works from the literature, we obtain {\em uniform} risk bounds that do not introduce additional terms or mixing coefficients and that are thus more widely applicable than those of \cite{Simchowitz18,Faradonbeh18} and tighter than those of, e.g., \cite{Mohri09}. 

Technically, our results rely on a simple construction of the ghost sample that enjoys the necessary properties, and, for Rademacher complexity-based bounds, a concentration inequality adapted to dependent variables by following \cite{vanDeGeer07}. Specifically, these allow us to prove the following claims along the different sections of the paper.
\begin{itemize}
	\item The standard classification risk bounds  of \cite{Vapnik98} that are based on the VC-dimension hold in the exact same form for non-i.i.d. training sequences. Interestingly, no new concentration result is required for the proof (Sect.~\ref{sec:VCbound}). 
	\item With an additional stationarity assumption, similar conclusions hold for VC relative deviation bounds and classification risk bounds with fast rates, thus generalizing the results of \cite{Vapnik98,Cortes19} (Sect.~\ref{sec:relativedeviation}).
	\item Extending the results above to deal with regression problems poses no difficulty in the non-i.i.d. context (Sect.~\ref{sec:VCreg}).	
	\item When the Rademacher complexity can be bounded in terms of the marginal distributions of the data, Rademacher complexity-based bounds identical to the ones for the standard i.i.d. case hold for dependent data. This is true in particular for linear and kernel machines, thus generalizing the results of \cite{Bartlett02} (Sect.~\ref{sec:rad}).
	\item Using the chaining method, standard risk bounds based on uniform covering numbers or the fat-shattering dimension are shown to hold also for non-i.i.d. data (Sect.~\ref{sec:chaining}).
	\item In the framework of robust optimization, the sample complexities of random programs derived in \cite{Alamo09,Lauer23} for independent scenarios also hold with dependent scenarios (Sect.~\ref{sec:scenarioopt}).  
\end{itemize}

Finally, the main drawback of our approach is that it does not allow the derivation of data-dependent bounds, i.e., risk bounds in which the complexity term is evaluated with respect to the available training sample. Data-dependent bounds may be tighter than worst-case or average estimates. However, this improvement remains limited in many standard cases. 
In addition, keeping in mind our motivation stemming from the field of scenario-based optimization, we note that data-dependent bounds are irrelevant for computing sample complexity estimates. 

%%%%%%%%%%%%%%%%%%%%%%%%%%%%%%%%%%%%%%%%%%%%%%%%%%%%
\section{Preliminaries}

This section presents the general learning framework with dependent data sequences and the basic tools needed to derive our main results. 

\subsection{Learning framework}
\label{sec:framework}

Let $(\A_i)_{0\leq i\leq n}$ denote a filtration and $\g Z_n=(Z_i)_{1\leq i\leq n}$ a sequence of random variables $Z_i=(X_i,Y_i)\in \X\times \Y=\Z$ adapted to $(\A_i)_{1\leq i\leq n}$,\footnote{A filtration is a sequence of increasing $\sigma$-algebras $\emptyset=\A_0 \subset \A_1\subset \A_2\subset\dots\subset\A_n$ and a sequence of random variables is adapted to it if each $Z_i$ is $\A_i$-measurable.} without any other assumption. Given a model class $\F$ of functions from $\X\to\Y$ and a bounded loss function $\ell:\Y^2 \to [0,B]$, we consider learning a model $f\in\F$ from such a training sequence $\g Z_n$ and aim at the estimation of its risk,
\begin{equation}\label{eq:risk}
	L_n(f) = \frac{1}{n} \sum_{i=1}^n\E \ell(f(X_i), Y_i), 
\end{equation}
where $\E$ denotes the expectation, 
by its empirical risk 
\begin{equation}\label{eq:empiricalrisk}
	\hat{L}_n(f) = \frac{1}{n} \sum_{i=1}^n \ell(f(X_i), Y_i).
\end{equation}

Note that for stationary sequences, the $Z_i$ variables are identically distributed and the risk~\eqref{eq:risk} merely boils down to the standard learning risk $\E \ell(f(X_1), Y_1)$, also considered for sequential data, e.g., in \cite{Mohri09}. 

For non-stationary sequences, the risk in~\eqref{eq:risk} does not really assess the ability of the model to predict future values of $Y_i$, since these need not share the same distribution with the $Y_i$'s in~\eqref{eq:risk}. However, it remains a valuable quantity for other applications, such as vector quantization or clustering problems, where, after the obvious reformulation of the loss as $\ell:\X^2\to[0,B]$, the risk in~\eqref{eq:risk} stands for the distortion or the clustering risk and evaluates how well the model approximates the distribution of the training sample.

Regarding prediction problems, note that the risk~\eqref{eq:risk} differs from the conditional risk
\begin{equation}\label{eq:conditionalrisk}
	\frac{1}{n} \sum_{i=1}^n \E\left[ \ell(f(X_i), Y_i) \mid \A_{i-1} \right]
\end{equation}
studied, e.g., in \cite{Kuznetsov15,Rahklin15}. 
The conditional risk~\eqref{eq:conditionalrisk} only measures the ability of the model to predict the (immediate) future of the training sequence, which is particularly well-suited for time-series forecasting problems. However, it is not suitable for applications such as dynamical system identification where the model is learned from training data offline, and then applied to predict the output of the system that might have been reset in the meantime. In other words, the conditional risk~\eqref{eq:conditionalrisk} refers only to the sample path of the random process used for training, whereas the risk in~\eqref{eq:risk} averages over all sample paths.

\subsection{Basic tools}

Most results in statistical learning theory rely on concentration arguments, which are easily available for samples of independent variables. While we will see in Sect.~\ref{sec:VCbound} that these are sufficient to extend some standard risk bounds to dependent sequences, others will require a concentration inequality for samples of dependent variables.
It is stated here as a specification of the results in \cite{vanDeGeer07}. The detailed proof can be found in Appendix~\ref{sec:proofboundeddiff} for completeness.  

\begin{theorem}[Bounded difference inequality for sequences of dependent variables]\label{thm:boundeddiff}
Let $(Z_i)_{1\leq i\leq n}$ denote a sequence of random variables (not necessarily stationary) taking values in $\Z$ and $g$ be a real-valued function of $Z_1$, ..., $Z_n$ such that it is $\A_n$-measurable and
$$
	\sup_{\substack{(z_j)_{1\leq j\leq n}\in\Z^n\\ z'\in\Z}} \vert g(z_1,\dots, z_i,\dots,z_n) - g(z_1,\dots, z',\dots,z_n)\vert \leq c_i,\quad i=1,\dots, n.
$$
Then, for any $\epsilon>0$,
$$
	P\left\{g(Z_1,\dots,Z_n) - \E g(Z_1,\dots,Z_n) > \epsilon \right\} \leq \exp\left(\frac{-2 \epsilon^2 }{\sum_{i=1}^n c_i^2}\right).
$$
\end{theorem}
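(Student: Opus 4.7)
The plan is to adapt the classical Doob martingale construction and apply an Azuma--Hoeffding-type inequality for martingale differences with bounded conditional range, in the form made explicit for dependent sequences by \cite{vanDeGeer07}. First, I would introduce the Doob martingale $M_i = \E[g(Z_1,\dots,Z_n) \mid \A_i]$ for $i = 0,\dots,n$. Since $g$ is $\A_n$-measurable and $\A_0$ is trivial, one has $M_n = g(Z_1,\dots,Z_n)$ and $M_0 = \E g(Z_1,\dots,Z_n)$. The increments $\Delta_i = M_i - M_{i-1}$ then form a martingale difference sequence with respect to $(\A_i)$, and telescoping yields $g(Z_1,\dots,Z_n) - \E g(Z_1,\dots,Z_n) = \sum_{i=1}^n \Delta_i$.

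The key step is then to show that, conditionally on $\A_{i-1}$, each increment $\Delta_i$ lies in an interval of length at most $c_i$. Using a regular conditional distribution to give meaning to $\E[g \mid \A_{i-1}, Z_i = z]$, I would introduce
$$
\phi_i(z) = \E\bigl[g(Z_1,\dots,Z_{i-1}, z, Z_{i+1},\dots,Z_n) \,\big|\, \A_{i-1}, Z_i = z\bigr],
$$
so that $M_i = \phi_i(Z_i)$ and $M_{i-1} = \E[\phi_i(Z_i) \mid \A_{i-1}]$. Setting $U_i = \sup_z \phi_i(z)$ and $L_i = \inf_z \phi_i(z)$ (both $\A_{i-1}$-measurable, interpreted as essential supremum and infimum), one obtains $L_i \leq M_{i-1}, M_i \leq U_i$ almost surely, so $\Delta_i$ belongs to an $\A_{i-1}$-measurable interval of length at most $U_i - L_i$. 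The uniform bounded-differences hypothesis on $g$ should then yield $U_i - L_i \leq c_i$.

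Equipped with this conditional range bound, the last step is essentially standard: I would apply the Azuma--Hoeffding inequality for martingales with conditionally bounded increments (in the form valid for dependent variables given in \cite{vanDeGeer07}) to the sequence $(\Delta_i)_{i=1}^n$ to conclude
$$
P\Bigl\{\sum_{i=1}^n \Delta_i > \epsilon\Bigr\} \leq \exp\!\left(-\frac{2\epsilon^2}{\sum_{i=1}^n c_i^2}\right),
$$
which, via the telescoping identity, is the stated bound. The main obstacle will be the second step: for independent $Z_i$'s, the conditional distribution of $(Z_{i+1},\dots,Z_n)$ given $\A_{i-1}$ and $Z_i$ does not depend on $Z_i$, and varying $z$ in $\phi_i$ amounts to changing only the $i$-th argument of $g$, so the bound $U_i - L_i \leq c_i$ follows immediately from bounded differences; in the present dependent setting the conditional law of the future may itself depend on $Z_i$, and it is precisely for this reason that the uniform, deterministic (rather than almost-sure) form of the bounded-differences assumption is needed to push the argument through along the lines of \cite{vanDeGeer07}.
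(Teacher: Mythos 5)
Your overall architecture (Doob martingale $M_i=\E[G\mid\A_i]$ with $G=g(Z_1,\dots,Z_n)$, predictable interval bounds on the increments, then the Azuma--Hoeffding-type supermartingale inequality of van de Geer) is exactly the paper's. The problem is the step you yourself flag as the ``main obstacle'': it is not a deferrable technicality, and with your construction it is actually false. Your $\phi_i(z)=\E\bigl[g(Z_1,\dots,Z_{i-1},z,Z_{i+1},\dots,Z_n)\mid\A_{i-1},Z_i=z\bigr]$ compares, for two values $z,z'$, integrals of $g$ against two \emph{different} conditional laws of the future $(Z_{i+1},\dots,Z_n)$, whereas the coordinate-wise bounded-differences hypothesis only compares $g$ at two points sharing the \emph{same} future coordinates; it gives no control on $\phi_i(z)-\phi_i(z')$. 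Concretely, take $n=2$, $Z_1$ uniform on $\{0,1\}$, $Z_2=Z_1$, $g(z_1,z_2)=z_1+z_2$, so $c_1=1$: then $\phi_1(1)-\phi_1(0)=2>c_1$, so your claim $U_i-L_i\le c_i$ fails, and no ``uniform, deterministic'' reading of the hypothesis rescues this particular definition. Since this range bound is the only place where the bounded-differences assumption enters, the proposal as written does not constitute a proof.

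The paper's proof diverges from yours at precisely this point: it defines $L_i=\inf_z\E\left[g(Z_1,\dots,Z_{i-1},z,Z_{i+1},\dots,Z_n)\mid\A_i\right]$ and $U_i=\sup_z\E\left[g(Z_1,\dots,Z_{i-1},z,Z_{i+1},\dots,Z_n)\mid\A_i\right]$, i.e., $z$ is substituted only in the argument of $g$ while the conditioning is on the realized history $\A_i$ (not on the event $Z_i=z$). Then a single conditional law of the future is used for every $z$, so $U_i-L_i\le c_i$ follows by applying the bounded-differences bound pointwise inside one conditional expectation, and $L_i\le\E[G\mid\A_i]\le U_i$ because the actual $Z_i$ is one admissible value of $z$; centering by $\E[G\mid\A_{i-1}]$ gives bounds of width at most $c_i$ on the increment $W_i=\E[G\mid\A_i]-\E[G\mid\A_{i-1}]$, to which Theorem~\ref{thm:hoeffding1} is applied. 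Note the trade-off this exposes: the paper's construction makes the range bound immediate but places the burden on the asserted $\A_{i-1}$-measurability (predictability) of $L_i,U_i$, which is the delicate point your alternative construction was implicitly trying to secure. Any complete write-up along your lines must confront this tension explicitly rather than appeal to the hypothesis ``pushing the argument through.''
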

Note that Theorem~\ref{thm:boundeddiff} provides the exact same result as McDiarmid's inequality for i.i.d. variables \citep{McDiarmid89}, with the same exponential rate. 

We also recall a famous result from \cite{Hoeffding63}, in its original form for {\em independent} variables, which is all we will require in the sequel. 
\begin{theorem}[Hoeffding's inequality]
\label{thm:hoeffding}
Let $(W_i)_{1\leq i\leq n}$ denote a sequence of independent random variables satisfying $W_i\in [a_i,b_i]$. Then, for any $\epsilon>0$,
$$
	P\left\{ \frac{1}{n}\sum_{i=1}^n W_i -  \frac{1}{n}\sum_{i=1}^n \E W_i > \epsilon \right\} \leq \exp\left(\frac{-2 n^2\epsilon^2 }{ \sum_{i=1}^n (b_i-a_i)^2}\right).
$$
%and
%$$
%	P\left\{  \frac{1}{n}\sum_{i=1}^n \E W_i - \frac{1}{n}\sum_{i=1}^n W_i > \epsilon \right\} \leq\exp\left(\frac{-2 n^2\epsilon^2 }{ \sum_{i=1}^n (b_i-a_i)^2}\right).
%$$
\end{theorem}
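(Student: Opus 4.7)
The plan is to follow the classical Chernoff--Cramér route, which gives exactly the exponent claimed. Let $S_n = \sum_{i=1}^n (W_i - \E W_i)$ and fix $s>0$. First I would apply Markov's inequality to the nonnegative random variable $\e^{sS_n}$, which yields
\begin{equation*}
	P\left\{ S_n > n\epsilon \right\} = P\left\{ \e^{sS_n} > \e^{sn\epsilon} \right\} \leq \e^{-sn\epsilon}\, \E \e^{sS_n}.
\end{equation*}
Since the $W_i$ are independent, so are the centered variables $W_i - \E W_i$, and the moment generating function factorizes: $\E \e^{sS_n} = \prod_{i=1}^n \E \e^{s(W_i-\E W_i)}$.

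The next step is to bound each factor via Hoeffding's lemma, which states that any random variable $U$ with $\E U = 0$ and $U\in[a-\E W, b-\E W]$ (so that the range has length $b-a$) satisfies $\E \e^{sU} \leq \exp\!\left(s^2(b-a)^2/8\right)$. This is the only nontrivial ingredient; its standard proof combines the convexity bound $\e^{sU} \leq \tfrac{b-U}{b-a}\e^{sa}+\tfrac{U-a}{b-a}\e^{sb}$ with a Taylor expansion of $\log$ of the resulting function of $s$, and I would simply invoke it. Applying it with $a=a_i$, $b=b_i$ gives
\begin{equation*}
	\E \e^{sS_n} \leq \exp\!\left(\frac{s^2}{8}\sum_{i=1}^n (b_i-a_i)^2\right).
\end{equation*}

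Combining the two bounds,
\begin{equation*}
	P\left\{\tfrac{1}{n}\sum_{i=1}^n W_i - \tfrac{1}{n}\sum_{i=1}^n \E W_i > \epsilon\right\}\leq \exp\!\left(-sn\epsilon + \frac{s^2}{8}\sum_{i=1}^n (b_i-a_i)^2\right).
\end{equation*}
The final step is to optimize the exponent over $s>0$. It is a quadratic in $s$ minimized at $s^\star = 4n\epsilon / \sum_{i=1}^n(b_i-a_i)^2$, which substituted back gives the exponent $-2n^2\epsilon^2 / \sum_{i=1}^n (b_i-a_i)^2$, matching the statement.

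The only real obstacle is Hoeffding's lemma itself, but since the theorem is explicitly presented as a classical recollection from \cite{Hoeffding63}, I would cite it rather than reproduce the convexity argument. No further machinery is needed, and in particular Theorem~\ref{thm:boundeddiff} is not required here since the hypothesis is full independence.
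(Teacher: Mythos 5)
Your proposal is correct: the Chernoff--Cram\'er bound combined with Hoeffding's lemma and optimization over $s$ gives exactly the stated exponent, and the paper itself offers no separate proof of Theorem~\ref{thm:hoeffding}, simply citing \cite{Hoeffding63}, whose classical argument is precisely the one you reproduce. Note also that the same two ingredients appear in the paper in conditional form (Lemma~\ref{lem:hoeffding} and the supermartingale argument in Appendix~\ref{sec:proofboundeddiff}) to prove Theorem~\ref{thm:boundeddiff}, so your remark that no dependent-variable machinery is needed here is accurate.
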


\section{Classification risk bounds based on the VC-dimension}
\label{sec:VCbound}
We first discuss how to derive the standard VC classification risk bounds for dependent data.
The first ingredient of such bounds is a symmetrization lemma, which in turn requires two things: a concentration inequality and a ghost sample. 
In the classical scenario where the training sample $\g Z_n$ is made of independent copies of some random variable $Z$, the ghost sample $\g Z_n'$ can merely be taken as an independent copy of $\g Z_n$. Then, standard concentration inequalities apply to this sample of i.i.d. variables $Z_i'$. 

In the case of a sequence of dependent variables, we instead build the ghost sample with variables $Z_i'$ taken as independent copies of the variables $Z_i$. Thus, we obtain a sample $\g Z_n'$ of independent variables that is also independent of $\g Z_n$ and with each $Z_i'\sim Z_i$ (the notation $A\sim B$ indicates that the two random variables $A$ and $B$ share the same distribution). Note however that the resulting ghost sample $\g Z_n'$ need not share the same joint distribution with $\g Z_n$ as in the classical i.i.d. scenario. But, a careful look at the proof of the standard symmetrization lemma of \cite{Vapnik98} shows that this is not needed for symmetrization to hold. In addition, in this proof, the concentration inequality is only applied to the ghost sample, not the training sample. Since our ghost sample is made of independent variables by construction, we have the following (detailed proof in App.~\ref{sec:proofsymmetrization}). 

\begin{lemma}[Symmetrization]\label{lem:VCsymmetrization}
Let $\g Z_n$ denote a sequence of possibly dependent variables and consider the ghost sample $\g Z_n'=(Z'_i)_{1\leq i\leq n}$ with each $Z'_i=(X'_i,Y'_i)$ built as an independent copy of $Z_i=(X_i,Y_i)$. Then, for any $\epsilon$ such that $n\epsilon^2\geq 2B^2$,
$$
P\left\{ \sup_{f\in\F} L_n(f) - \hat{L}_n(f) \geq \epsilon\right\} 
	\leq 2 P\left\{ \sup_{f\in\F} \hat{L}'_n(f) - \hat{L}_n(f) \geq \frac{\epsilon}{2}\right\} ,
$$
where $\hat{L}'_n(f) = \frac{1}{n}\sum_{i=1}^n \ell(f(X_i'), Y_i')$.
\end{lemma}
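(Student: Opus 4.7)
The plan is to follow Vapnik's classical symmetrization argument, observing that the specific ghost sample constructed here supplies the two ingredients the argument really needs, even without any joint distributional identity between $\g Z_n$ and $\g Z_n'$. Write $A = \{\sup_{f\in\F} L_n(f) - \hat{L}_n(f) \geq \epsilon\}$ and $B = \{\sup_{f\in\F} \hat{L}'_n(f) - \hat{L}_n(f) \geq \epsilon/2\}$. On $A$, pick a near-maximizer $f^\star = f^\star(\g Z_n)$ satisfying $L_n(f^\star) - \hat{L}_n(f^\star) \geq \epsilon$ (using a $\delta$-approximation followed by $\delta\to 0$, or a separability assumption on $\F$, to sidestep measurability of the supremum). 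The goal is to show $P(B) \geq P(A)/2$.

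The central identity driving the argument is $\E[\hat{L}'_n(f^\star) \mid \g Z_n] = L_n(f^\star)$. This is the main subtle step, and the reason the construction of the ghost sample matters: conditioning on $\g Z_n$ freezes $f^\star$, and since each $Z_i'$ is independent of $\g Z_n$ with $Z_i' \sim Z_i$, one has $\E[\ell(f^\star(X_i'), Y_i') \mid \g Z_n] = \E\ell(g(X_i), Y_i)\big|_{g = f^\star}$, whose average over $i$ reproduces $L_n(f^\star)$ by the definition~\eqref{eq:risk}. Crucially, this does not require the joint distribution of $\g Z_n'$ to match that of $\g Z_n$; only marginal matching and cross-independence are used, and these are exactly what the construction in Sect.~\ref{sec:VCbound} guarantees.

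Next, because the $Z_i'$ are additionally mutually independent, the conditional variance splits additively across $i$, and each term is bounded by $B^2/4$ since $\ell\in[0,B]$, giving $\V[\hat{L}'_n(f^\star) \mid \g Z_n] \leq B^2/(4n)$. Chebyshev's inequality then yields $P(L_n(f^\star) - \hat{L}'_n(f^\star) > \epsilon/2 \mid \g Z_n) \leq B^2/(n\epsilon^2) \leq 1/2$ under the hypothesis $n\epsilon^2 \geq 2B^2$. On $A$ we have $\hat{L}_n(f^\star) \leq L_n(f^\star) - \epsilon$, so the complementary event $\{\hat{L}'_n(f^\star) - L_n(f^\star) \geq -\epsilon/2\}$ implies $\hat{L}'_n(f^\star) - \hat{L}_n(f^\star) \geq \epsilon/2$, which in turn witnesses $B$. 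Hence $P(B) \geq \E[\I{A}\, P(B \mid \g Z_n)] \geq P(A)/2$, which is the desired conclusion. The only point where a concentration-type tool is invoked is Chebyshev on the ghost sample, and this is legitimate precisely because the $Z_i'$ are independent by construction, bypassing the dependence structure of $\g Z_n$ entirely.
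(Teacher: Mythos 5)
Your proposal is correct and follows essentially the same route as the paper's proof: freeze a (near-)maximizer $f^*$ depending only on $\g Z_n$, use the ghost-sample construction (marginal matching plus independence from $\g Z_n$) to get $\E[\hat{L}'_n(f^*)\mid \g Z_n]=L_n(f^*)$, and apply a conditional Chebyshev bound with variance at most $B^2/(4n)$ to obtain the factor $1/2$ under $n\epsilon^2\geq 2B^2$. The only differences are cosmetic (the paper spells out the elementary implication on real numbers and a two-sided Chebyshev step), so there is nothing to add.
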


Equipped with this tool, 
classification risk bounds based on the zero-one loss\footnote{With the loss~\eqref{eq:zerooneloss}, the risk~\eqref{eq:risk} is just the probability of misclassification $P(f(X)\neq Y)$ for stationary sequences.}
\begin{equation}\label{eq:zerooneloss}
	\ell(\hat{y},y) = \I{\hat{y}\neq y}
\end{equation}
can be obtained in terms of the growth function $\Pi_{\F}(n)$ of the class $\F$ that can be bounded by the VC-dimension $d_{VC}$. 

\begin{definition}[Growth function]\label{def:growthfunction}
For a set $\F$ of classifiers $f:\X\to\Y$ with a discrete set $\Y$ of finite cardinality, its {\em growth function} is the largest cardinality of the set of classifications produced by its classifiers over all sets $\g x_n=(x_i)_{1\leq i\leq n}$ of $n$ points:
$$
	\Pi_{\F}(n) = \sup_{\g x_n\in\X^n} \lvert \{(f(x_1),\dots,f(x_n)) : f\in\F\}\rvert.
$$
\end{definition}

\begin{definition}[VC-dimension]\label{def:VCdim}
The {\em Vapnik-Chervonenkis (VC) dimension} $d_{VC}$ of a set $\F$ of binary classifiers $f:\X\to\Y$, $\vert\Y\vert =2$, is the largest number of points $n$ such that $\Pi_{\F}(n) =2^n$.
\end{definition}

More precisely, it is a remarkable fact that no new concentration result is needed to derive a risk bound for dependent data from Lemma~\ref{lem:VCsymmetrization}. The trick is that, by introducing independent Rademacher variables and conditioning, we only require concentration with respect to these variables rather than for the samples themselves. Therefore, the classical Hoeffding inequality of Theorem~\ref{thm:hoeffding} applies and the proof of the next result (given in Appendix~\ref{sec:proofVCbound}) only requires that the quantity $\I{f(X_i)\neq Y_i} - \I{f(X'_i)\neq Y'_i}$ is a symmetric random variable, which is ensured by the fact that $(X'_i,Y'_i)$ is an independent copy of $(X_i,Y_i)$.

\begin{theorem}[Basic VC risk bound for dependent data]
\label{thm:VCbound}
Let $\F$ be a set of classifiers $f:\X\to\{-1,+1\}$ with VC-dimension $d_{VC}$ and $\ell$ denote the classification loss~\eqref{eq:zerooneloss}. Then, for any $\delta\in(0,1)$, with probability at least $1-\delta$, 
\begin{align*}
	\forall f\in\F,\quad L_n(f) 
			&\leq \hat{L}_n(f) + 2\sqrt{2\frac{ \log \Pi_{\F}(2n) + \log \frac{2}{\delta}}{n}}\\
			&\leq \hat{L}_n(f) + 2\sqrt{2\frac{d_{VC}\log \frac{2 e n}{d_{VC}} + \log \frac{2}{\delta}}{n}}.
\end{align*}
\end{theorem}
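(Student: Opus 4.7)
The plan is to chain together Lemma~\ref{lem:VCsymmetrization} with a Rademacher symmetrization step, a conditioning argument that replaces the (uncountable) class $\F$ by a finite set of dichotomies on the $2n$ sample points, and finally Hoeffding's inequality applied only to the Rademacher variables. This mirrors the classical Vapnik proof but crucially exploits that Lemma~\ref{lem:VCsymmetrization} already did the work of reducing to the ghost sample, so the remaining steps never need to concentrate a function of the dependent $Z_i$'s themselves.

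The first step is to apply Lemma~\ref{lem:VCsymmetrization}, which reduces the problem to controlling $P\{\sup_{f\in\F}\hat{L}'_n(f)-\hat{L}_n(f)\geq \epsilon/2\}$ on the combined sample $(\g Z_n,\g Z_n')$. Next I would introduce i.i.d.\ Rademacher variables $(\sigma_i)_{1\leq i\leq n}$, independent of $(\g Z_n,\g Z_n')$. For each $i$, by construction $Z_i'$ is an independent copy of $Z_i$, so the pair $(Z_i,Z_i')$ is exchangeable and the random variable
\[
W_i(f)=\I{f(X_i')\neq Y_i'}-\I{f(X_i)\neq Y_i}
\]
is symmetric. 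Hence $(W_1(f),\dots,W_n(f))$ and $(\sigma_1 W_1(f),\dots,\sigma_n W_n(f))$ have the same joint distribution, and the probability we want to bound equals $P\{\sup_{f\in\F}\tfrac{1}{n}\sum_{i=1}^n \sigma_i W_i(f)\geq \epsilon/2\}$.

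Now I would condition on $(\g Z_n,\g Z_n')$. Once the $2n$ inputs $X_1,\dots,X_n,X_1',\dots,X_n'$ are fixed, the set of vectors $(f(X_1),\dots,f(X_n),f(X_1'),\dots,f(X_n'))$ indexed by $f\in\F$ has cardinality at most $\Pi_{\F}(2n)$. A union bound over these at most $\Pi_{\F}(2n)$ distinct vectors reduces the sup to a maximum over a finite set. For each fixed $f$ (equivalently each fixed dichotomy), conditionally on the samples, $\sigma_1 W_1(f),\dots,\sigma_n W_n(f)$ are independent, mean zero, and take values in $[-1,1]$, so Hoeffding's inequality (Theorem~\ref{thm:hoeffding}) gives $P\{\tfrac{1}{n}\sum_i \sigma_i W_i(f)\geq \epsilon/2 \mid \g Z_n,\g Z_n'\}\leq \exp(-n\epsilon^2/8)$. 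Taking expectations over $(\g Z_n,\g Z_n')$, combining with the union bound and with Lemma~\ref{lem:VCsymmetrization}, yields
\[
P\Bigl\{\sup_{f\in\F} L_n(f)-\hat{L}_n(f)\geq \epsilon\Bigr\}\leq 2\,\Pi_{\F}(2n)\exp\!\Bigl(-\tfrac{n\epsilon^2}{8}\Bigr).
\]
Setting this equal to $\delta$ and inverting yields the first displayed inequality; Sauer's lemma, $\Pi_{\F}(2n)\leq (2en/d_{VC})^{d_{VC}}$, gives the second.

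The only subtle point—and really the whole point of the construction in Section~3—is to verify that the symmetry argument used to insert the Rademacher variables continues to work even though $(\g Z_n,\g Z_n')$ is not a pair of i.i.d.\ sequences: what matters is the per-coordinate exchangeability of $Z_i$ with $Z_i'$, which holds by definition of the ghost sample. Once that is in place, every remaining step is applied only to the Rademacher variables (independent and bounded), so no mixing coefficient or sequential complexity is needed, and the bound retains exactly its i.i.d.\ form.
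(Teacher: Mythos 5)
Your proposal is correct and follows essentially the same route as the paper's own proof: symmetrization via Lemma~\ref{lem:VCsymmetrization}, insertion of Rademacher variables justified by the per-coordinate symmetry of $\I{f(X_i')\neq Y_i'}-\I{f(X_i)\neq Y_i}$ (which only needs $Z_i'$ to be an independent copy of $Z_i$), conditioning on $(\g Z_n,\g Z_n')$ with a union bound over the at most $\Pi_{\F}(2n)$ dichotomies, conditional Hoeffding giving $\exp(-n\epsilon^2/8)$, and inversion plus Sauer's lemma. No gaps worth noting.
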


Note that Theorem~\ref{thm:VCbound} applies to dependent training sequences $\g Z_n$ and provides the exact same result as \cite{Vapnik98} did for the i.i.d. case.

\begin{example}[Linear classification]
Consider the set of linear classifiers, $\F=\{ f : f(x) = \sign(\inner{w}{x} + b),\ w\in\R^d,\ b\in\R\}$, of VC-dimension $d_{VC}=d+1$ \citep{Vapnik98}. For all $d\geq 3$, Theorem~\ref{thm:VCbound} then yields for $n=100\,000$ that, with $95\%$ probability, the risk of any classifier $f\in\F$ can be estimated from its empirical risk on the training sample of dependent instances with accuracy no less than  $ 0.031\sqrt{d+1}+0.015$.
\end{example}

\subsection{Relative deviation bounds}
\label{sec:relativedeviation}

Risk bounds with a faster convergence rate close to $O(1/n)$ instead of $O(1/\sqrt{n})$ can be derived from relative deviation bounds of the form
$$
	P\left\{ \sup_{f\in\F} \frac{L_n(f) - \hat{L}_n(f)}{\sqrt{L_n(f)}} \geq \epsilon\right\} \leq A \exp(-nC),
$$
for some constants $A$ and $C$. In turn, obtaining such results requires a different form of symmetrization, associated to a concentration argument. 

The detailed proof of the corresponding symmetrization given in \cite{Cortes19} shows that independence is used at only two different places. It is first required between data points to apply a binomial tail bound, but only on the ghost sample. Then, the fact that the training sample is independent of the ghost sample is used. 

Considering now a construction of the ghost sample as in Theorem~\ref{lem:VCsymmetrization}, we directly obtain the required indepedence: $\g Z_n$ is independent of $\g Z_n'$ and all $Z_i'$ in the ghost sample are independent of each other. This leads to the following generalization of the result of \cite{Vapnik98,Cortes19} to stationary sequences of training data (the detailed proof can be found in Appendix~\ref{sec:proofrelativesymmetrization} for completeness). Note that stationarity, which is required for the binomial tail bound, is a reasonable assumption for a prediction task as considered here.
\begin{lemma}[Symmetrization for relative deviations]\label{lem:relativesymmetrization}
Let $\g Z_n=(Z_i)_{1\leq i\leq n}$ denote a stationary sequence of possibly dependent variables and consider the ghost sample $\g Z_n'=(Z'_i)_{1\leq i\leq n}$ with each $Z'_i=(X'_i,Y'_i)$ built as an independent copy of $Z_i=(X_i,Y_i)$. Let $\ell$ denote the classification loss~\eqref{eq:zerooneloss}. Then, for any $\epsilon$ such that $n\epsilon^2>1$,
$$
P\left\{ \sup_{f\in\F} \frac{L_n(f) - \hat{L}_n(f)}{\sqrt{L_n(f)}} \geq \epsilon\right\} 
	\leq 4 P\left\{ \sup_{f\in\F} \frac{ \hat{L}'_n(f) - \hat{L}_n(f) }{\sqrt{(\hat{L}_n(f)+\hat{L}'_n(f)+\frac{1}{n})/2}}\geq \frac{\epsilon}{2}\right\} ,
$$
where $\hat{L}'_n(f) = \frac{1}{n}\sum_{i=1}^n \I{f(X_i')\neq Y_i'}$.
\end{lemma}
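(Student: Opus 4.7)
The plan is to adapt the proof of relative deviation symmetrization from \cite{Vapnik98,Cortes19} to our non-i.i.d.\ setting. As pointed out in the excerpt, that proof uses independence only in two places: independence between the points of the ghost sample (to apply a binomial tail bound conditionally on $\g Z_n$), and independence between the training sample and the ghost sample (to separate the supremum over $f$ from the conditional binomial calculation). Both properties are guaranteed by our construction of $\g Z_n'$. Stationarity enters as a third ingredient: it ensures that each Bernoulli summand $\I{f(X_i')\neq Y_i'}$ in $n\hat{L}'_n(f)$ shares the common mean $\E\I{f(X_1)\neq Y_1}=L_n(f)$, so that $n\hat{L}'_n(f)\sim\mathrm{Binomial}(n,L_n(f))$ conditionally on $\g Z_n$ whenever $f$ is $\g Z_n$-measurable.

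The proof then proceeds in three standard steps. Let $A$ denote the LHS event and $B$ the event inside the probability on the RHS. On $A$, choose a measurable selector $f^*=f^*(\g Z_n)\in\F$ that attains (up to an arbitrarily small error) the supremum, so that $L_n(f^*)-\hat{L}_n(f^*)\geq\epsilon\sqrt{L_n(f^*)}$, which in particular forces $L_n(f^*)\geq\epsilon^2$. Next, introduce an auxiliary event $C$ on the ghost sample of the form $\hat{L}'_n(f^*)\geq L_n(f^*)$ (or a one-sided tail inequality close to it), chosen so that the algebraic reduction below yields $A\cap C\subseteq B$. Conditionally on $\g Z_n$, $n\hat{L}'_n(f^*)$ is $\mathrm{Binomial}(n,L_n(f^*))$; since $nL_n(f^*)\geq n\epsilon^2>1$ on $A$, the one-sided binomial tail bound employed in \cite{Vapnik98,Cortes19} yields $P(C\mid\g Z_n)\geq 1/2$ uniformly on $A$. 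Integrating out, $P(B)\geq P(A\cap C)=\E[\mathbf{1}_A\,P(C\mid\g Z_n)]\geq\tfrac{1}{2}P(A)$; the remaining factor of $2$ in the final constant $4$ comes from the halving of $\epsilon$ in the algebraic step.

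Finally, the inclusion $A\cap C\subseteq B$ is obtained by combining $L_n(f^*)-\hat{L}_n(f^*)\geq\epsilon\sqrt{L_n(f^*)}$ with the inequality encoded by $C$ and manipulating square roots to recover the denominator $\sqrt{(\hat{L}_n(f^*)+\hat{L}'_n(f^*)+1/n)/2}$ of $B$ with the prefactor $\epsilon/2$. This algebraic reduction is the main obstacle: the threshold defining $C$, the $1/n$ correction inside the square root on the RHS, and the prefactors must all be chosen compatibly so that $A\cap C$ really implies $B$ (the $1/n$ term in particular absorbs the binomial's standard-deviation slack and is the reason for the hypothesis $n\epsilon^2>1$). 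Once set up as in the i.i.d.\ proof, the rest carries over verbatim because the joint law of $\g Z_n$ is never invoked beyond the fact that its marginals match those of the independent ghost sample, which is exactly what stationarity guarantees.
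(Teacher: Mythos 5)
Your skeleton is the same as the paper's: pick a $\g Z_n$-measurable near-maximizer $f^*$, define the auxiliary event $\{\hat{L}'_n(f^*)\geq L_n(f^*)\}$, note that conditionally on $\g Z_n$ the quantity $n\hat{L}'_n(f^*)$ is Binomial$(n,L_n(f^*))$ because the ghost variables are i.i.d.\ with the stationary marginal and independent of $\g Z_n$, apply a ``binomial exceeds its expectation'' bound, and show that the intersection of the two events implies the right-hand event. However, two points do not survive scrutiny. First, the constant accounting is wrong: the correct tail bound here (Greenberg--Mohri, used in the paper and in Cortes et al.) is $P\{X\geq \E X\}>1/4$ when $np>1$, not $\geq 1/2$; for $np$ slightly above $1$ one has $P\{X\geq np\}=P\{X\geq 2\}\approx 0.26$, so your claimed uniform bound of $1/2$ is false. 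Moreover, your plan to recover ``the remaining factor of $2$'' from the halving of $\epsilon$ has no mechanism behind it: replacing $\epsilon$ by $\epsilon/2$ on the right-hand side only enlarges that event and costs nothing probabilistically, so it cannot supply a missing factor. The fix is simply to use the $1/4$ bound, which yields $P(A)\leq 4P(C)$ directly (in fact with threshold $\epsilon$, which implies the stated lemma with $\epsilon/2$ a fortiori).

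Second, you explicitly defer the algebraic inclusion $A(f^*)\cap\{\hat{L}'_n(f^*)\geq L_n(f^*)\}\subseteq C(f^*)$, calling it ``the main obstacle'' and leaving the choice of thresholds and the role of the $1/n$ correction unverified. That inclusion is the actual content of the lemma beyond the standard scheme, so as written the proof is incomplete at its crux. The paper carries it out by observing that $(a-b)/\sqrt{(a+b+\tfrac1n)/2}$ is increasing in $a$ and decreasing in $b$, substituting $\hat{L}_n(f^*)<L_n(f^*)-\epsilon\sqrt{L_n(f^*)}$ and $\hat{L}'_n(f^*)\geq L_n(f^*)$, using $\epsilon<\sqrt{L_n(f^*)}$ (which follows from $A(f^*)$), and invoking $n\epsilon^2>1$ to cancel the $+\tfrac1n$ against $-\epsilon^2$ in the denominator; note also that $n\epsilon^2>1$ is additionally needed to guarantee $L_n(f^*)>1/n$ so that the binomial lemma applies, whereas your description of its role (``absorbing the binomial's standard-deviation slack'') does not reflect how it is actually used.
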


As for Theorem~\ref{thm:VCbound}, we can derive a fast-rate risk bound from Lemma~\ref{lem:relativesymmetrization} using only standard concentration arguments, i.e., by following the proof of \cite{Cortes19} and introducing independent Rademacher variables, and the symmetry of $\I{f(X_i)\neq Y_i} - \I{f(X'_i)\neq Y'_i}$ ensured by our ghost sample construction (detailed proof given in App.~\ref{sec:proofgeneralVCbound}).

\begin{theorem}[General VC risk bound for dependent data]
\label{thm:generalVCbound}
Let $\F$ be a class of classifiers $f:\X\to\{-1,+1\}$ with VC-dimension $d_{VC}$ and  $\ell$ denote the classification loss~\eqref{eq:zerooneloss}. Let $\g Z_n=(Z_i)_{1\leq i\leq n}$ denote a stationary sequence of possibly dependent variables. Then, for any $\delta\in(0,1)$, with probability at least $1-\delta$, 
\begin{align*}
	\forall f\in\F,\quad L_n(f) &\leq \hat{L}_n(f) + 2\sqrt{\hat{L}_n(f)\frac{ \log \Pi_{\F}(2n) + \log \frac{4}{\delta}}{n}} + 4\frac{ \log \Pi_{\F}(2n) + \log \frac{4}{\delta}}{n} \\
			&\leq \hat{L}_n(f) + 2\sqrt{\hat{L}_n(f)\frac{ d_{VC}\log \frac{2 e n}{d_{VC}} + \log \frac{4}{\delta}}{n}} + 4\frac{d_{VC}\log \frac{2 e n}{d_{VC}}+ \log \frac{4}{\delta}}{n}.
\end{align*}
\end{theorem}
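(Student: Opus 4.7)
The plan is to closely follow the i.i.d.\ proof template of \cite{Cortes19}, substituting Lemma~\ref{lem:relativesymmetrization} at the symmetrization step so that the independent entries of our ghost sample supply all the independence we need.

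First, I apply Lemma~\ref{lem:relativesymmetrization}, whose stationarity hypothesis matches the one carried over into this theorem, to upper bound $P\{\sup_f (L_n-\hat L_n)/\sqrt{L_n}\geq\epsilon\}$ by $4\,P\{\sup_f (\hat L'_n-\hat L_n)/\sqrt{(\hat L_n+\hat L'_n+1/n)/2}\geq\epsilon/2\}$. The ghost sample $\g Z_n'$ has independent entries $Z_i' \sim Z_i$ by construction, which is exactly what Lemma~\ref{lem:relativesymmetrization} needs.

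Second, I reduce the supremum over $\F$ by the growth function: conditional on $(\mathbf X_n,\mathbf X_n')$, the vector $(f(X_1),\dots,f(X_n),f(X_1'),\dots,f(X_n'))$ takes at most $\Pi_{\F}(2n)$ distinct values as $f$ varies, so a union bound over representatives costs only this multiplicative factor. Third, for each fixed representative $f$, I bound the per-classifier tail. Introducing $n$ independent Rademacher variables $\sigma_i$ (independent of everything else) and using the marginal symmetry of $V_i = \I{f(X_i) \neq Y_i} - \I{f(X_i') \neq Y_i'}$---which holds because $Z_i'$ is an independent copy of $Z_i$---the target probability is reduced to one in which only the $\sigma_i$'s are random. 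Conditioning on $(\g Z_n, \g Z_n')$ and applying Hoeffding's inequality (Theorem~\ref{thm:hoeffding}) to the independent Rademacher-weighted sum yields an exponential tail; because $|V_i| \leq 1$ and $\sum_i V_i^2 \leq n(\hat L_n + \hat L_n')$, the Hoeffding exponent aligns with the Bernstein-like denominator from Lemma~\ref{lem:relativesymmetrization} and produces a tail of the form $\exp(-c n \epsilon^2)$ for an absolute constant $c$.

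Combining with the $\Pi_{\F}(2n)$ union-bound factor, setting the total probability to $\delta$, and inverting for $\epsilon$, I get a relative-deviation bound of the form $(L_n-\hat L_n)/\sqrt{L_n} \leq \eta$; converting this into an explicit upper bound on $L_n$ reduces to solving a quadratic inequality in $\sqrt{L_n}$ and yields the first displayed inequality (with constants $2$ and $4$ dropping out of the quadratic). The second inequality follows from Sauer-Shelah, $\Pi_{\F}(2n) \leq (2en/d_{VC})^{d_{VC}}$. The main obstacle is justifying the Rademacher substitution in step three in the dependent setting: since the pairs $(Z_i, Z_i')$ are not jointly exchangeable across $i$, the sign-flip step cannot rely on a global symmetry of the full $2n$-sample distribution, and must instead be pushed through carefully using only the marginal symmetry of each $V_i$ together with the independence of the $\sigma_i$'s from the samples and the fact that, after conditioning on $(\g Z_n,\g Z_n')$, all remaining randomness sits in the Rademachers to which Theorem~\ref{thm:hoeffding} applies.
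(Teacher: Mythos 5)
Your proposal is correct and follows essentially the same route as the paper's proof: symmetrization via Lemma~\ref{lem:relativesymmetrization}, a Rademacher sign-flip justified by the marginal symmetry of each $\I{f(X_i')\neq Y_i'}-\I{f(X_i)\neq Y_i}$, a conditional application of Hoeffding's inequality in which the Bernstein-like denominator absorbs $\sum_i V_i^2\leq n(\hat L_n+\hat L_n')$ (the paper's $\sum_i c_i^2\leq 8n$), a union bound over the at most $\Pi_{\F}(2n)$ dichotomies, and finally the quadratic inversion plus Sauer's lemma. The delicate point you flag at the end is exactly the one the paper handles, and in the same way, so there is no substantive divergence.
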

Here again, the bounds of Theorem~\ref{thm:generalVCbound} provide the same guarantees as the classical results of \cite{Vapnik98,Cortes19}, while holding more generally for dependent data. In particular, for classifiers achieving a small empirical error $\hat{L}_n(f)$, these are tighter than the ones of Theorem~\ref{thm:VCbound}, with a convergence rate in $O(1/n)$ in the optimistic scenario where a perfect fit of the data can be ensured.

Finally, we note that the fast-rate and margin-based risk bounds provided in~\cite{Cortes21} and that involve covering numbers instead of the growth function can also be proved to hold for dependent data using similar arguments as above. However, we refrain from giving the details and will instead consider margin classifiers when discussing Rademacher complexity-based bounds in Sect.~\ref{sec:rad}.

\subsection{Regression bounds}
\label{sec:VCreg}

When $\ell:\Y^2\to [0,B]$ is a regression loss bounded by $B$, the VC theory for classification applies thanks to Inequality (5.11) in~\cite{Vapnik98}:
\begin{equation}\label{eq:VCreg}
	P\left\{ \sup_{f\in\F} L_n(f) - \hat{L}_n(f) > \epsilon\right\}
	\leq P\left\{ \sup_{f\in\F,\beta\in(0,B)} \tilde{L}(f) - \hat{\tilde L}_n(f) > \frac{\epsilon}{B}\right\},
\end{equation}
where $\tilde{L}(f)$ and $\hat{\tilde{L}}_n(f)$ are defined as in~\eqref{eq:risk}--\eqref{eq:empiricalrisk} with the classification loss $\tilde{\ell}(f(X_i),Y_i) = \I{\ell(f(X_i),Y_i)-\beta\geq 0}$. Interestingly, it can be checked that Inequality~\eqref{eq:VCreg} holds irrespective of the independence of the $Z_i$. Thus, the classification bounds above can be applied to its right-hand side with the VC-dimension of a function class that has one additional parameter $\beta\in(0,B)$, and this yields bounds on the regression risk of models learned from dependent data sequences. 

\begin{example}[VC bound for linear system identification]
Consider a regression problem with data generated by a linear dynamical system as $y_{i} = \inner{\theta}{x_i} + \nu_i$ with a random noise term $\nu_i$ and $x_i=(y_{i-1},\dots,y_{i-d})$. In this case, the data collected from a single trajectory of the system clearly cannot be assumed to be i.i.d., but the results above still yield performance guarantees. Specifically, for the squared loss $\ell(\hat{y},y)=(\hat{y}-y)^2$ and a class of linear models $\F=\{f : f(x) = \inner{w}{x}  ,\ w\in\R^d\}$ over $\X\subset\R^d$, the VC-dimension of the set of functions $\I{\ell(f(x),y)-\beta\geq 0}$ induced by $\F\times (0,B)$ can be computed as follows. Let $\phi$ be the function that maps $x\in\X$ into a vector of all the $d^2$ monomials of degree $2$ over the components of $x$: $\phi_{i+(j-1)d}(x) = x_i x_j$, $i=1,\dots,d$, $j=1,\dots,d$. Let $z=(x,y)$ and $\psi(z) = (\phi(x), yx, y^2)$. Then, for any $f\in\F$ and $\beta\in(0,B)$, 
\begin{align*}
	\ell(f(x),y)-\beta &= (\inner{w}{x} - y)^2-\beta
	= (\inner{w}{x})^2 - 2\inner{w}{yx} + y^2 - \beta,
\end{align*} 
which is a quadratic function of $z$ but a linear function of $\psi(z)$. Therefore, the VC-dimension of the set of functions $\I{\ell(f(x),y)-\beta\geq 0}$ cannot be larger than the VC-dimension of the set of linear classifiers of $\R^{d^2+d+1}$ to which all the $\psi(z)$ belong, i.e., $d_{VC}\leq d^2+d+2$. Thus, Inequality~\eqref{eq:VCreg} ensures with Theorem~\ref{thm:VCbound} that, with probability at least $1-\delta$,
$$
	\forall f\in\F,\quad L_n(f) \leq \hat{L}_n(f) + 2B\sqrt{2\frac{(d^2+d+2)\log \frac{2 e n}{d^2+d+2} + \log \frac{2}{\delta}}{n}} .
$$
\end{example}

\section{Rademacher complexity-based bounds}
\label{sec:rad}

We now consider the derivation of risk bounds based on Rademacher complexities instead of VC-dimensions. This also relies on a symmetrization step, which we conduct using a ghost sample defined as in Section~\ref{sec:VCbound}. In addition, we will also require a bounded difference inequality for dependent variables, which we established in Theorem~\ref{thm:boundeddiff}. 

Let us first define the relevant capacity measures. 
\begin{definition}[Rademacher complexities]
Let $\g T_n=(T_i)_{1,\leq i\leq n}$ be a sequence of (not necessarily independent nor identically distributed) random variables $T_i\in\mathcal{T}$ and $\g\sigma_n=(\sigma_i)_{1\leq i\leq n}$ an i.i.d. sequence of uniformly distributed $\sigma_i\in\{-1,+1\}$. Let $\F$ be a class of real-valued functions over $\mathcal{T}$. The {\em empirical Rademacher complexity} of $\F$ given $\g T_n$ is
$$
	\hat{\mathcal{R}}_{\g T_n}(\F) = \E\left[ \left. \sup_{f\in\F}\frac{1}{n}\sum_{i=1}^n \sigma_i f(T_i) \ \right\vert \g T_n \right]
$$
and the {\em Rademacher complexity} of $\F$ is 
$$
	\mathcal{R}_{\g T_n}(\F) = \E\hat{\mathcal{R}}_{\g T_n}(\F) = \E \sup_{f\in\F}\frac{1}{n}\sum_{i=1}^n \sigma_i f(T_i).
$$
\end{definition}
Note that we keep the subscript $\g T_n$ in the notation of the Rademacher complexity to emphasize that its definition depends on the distribution of $\g T_n$. For a sequence of values $\g t_n=(t_i)_{1\leq i\leq n}\in\mathcal{T}^n$, we will also occasionally write $\hat{\mathcal{R}}_{\g t_n}(\F)$ with a lowercase $\g t_n$ in the subscript to refer to the value of the empirical Rademacher complexity computed given $\g T_n=\g t_n$.

We can now state the first result of this section (proved in App.~\ref{sec:proofradsymmetrization}).
\begin{theorem}[Rademacher complexity-based bound]\label{thm:radbound}
Let $\g Z_n$ denote a sequence of possibly dependent variables and consider the ghost sample $\g Z_n'=(Z'_i)_{1\leq i\leq n}$ with each $Z'_i=(X'_i,Y'_i)$ built as an independent copy of $Z_i=(X_i,Y_i)$. Let $\ell : \Y\times \Y\to [0,B]$ be a loss function. Then, for any $\delta\in(0,1)$, 
with probability at least $1-\delta$,
$$
	\forall f\in\F,\quad   L_n(f) \leq  \hat{L}_n(f) 
	+ \mathcal{R}_{\g Z_n}(\L) +\mathcal{R}_{\g Z'_n}(\L)+ B\sqrt{\frac{ \log \frac{1}{\delta}}{2n}},
$$
where $\L=\{\ell_f : \ell_f(z) = \ell(f(x),y),\ f\in\F\}$, and $\mathcal{R}_{\g Z'_n}(\L)$ is the Rademacher complexity  of $\L$ defined over the ghost sample $\g Z_n'$ instead of $\g Z_n$.
\end{theorem}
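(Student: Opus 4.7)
The plan is to prove the statement in two stages: first concentrate $\sup_{f\in\F}[L_n(f)-\hat L_n(f)]$ about its expectation by applying Theorem~\ref{thm:boundeddiff}, then bound that expectation using the ghost-sample construction to expose the two Rademacher complexities. For the concentration stage, I would set $\Phi(\g Z_n) = \sup_{f\in\F}[L_n(f) - \hat L_n(f)]$ and observe that $L_n(f)$ is a deterministic function of $f$, while $\hat L_n(f)$ changes by at most $B/n$ uniformly over $f$ when a single coordinate $Z_i$ is replaced, since $\ell \in [0,B]$. Thus $\Phi$ satisfies the bounded-difference hypothesis with $c_i = B/n$, and Theorem~\ref{thm:boundeddiff} directly yields $\Phi(\g Z_n) \leq \E \Phi(\g Z_n) + B\sqrt{\log(1/\delta)/(2n)}$ with probability at least $1-\delta$.

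For the symmetrization stage, the key identity to exploit is $L_n(f) = \E_{\g Z_n'} \hat L_n'(f)$, which holds coordinate by coordinate because each $Z_i'$ is an independent copy of $Z_i$ and hence shares its marginal distribution; note that only marginal agreement is used, not any joint agreement between $\g Z_n$ and $\g Z_n'$. Applying Jensen's inequality to the supremum then yields $\E \Phi \leq \E_{\g Z_n,\g Z_n'} \sup_{f}[\hat L_n'(f) - \hat L_n(f)]$, and the remaining task is to upper bound the right-hand side by $\mathcal{R}_{\g Z_n}(\L) + \mathcal{R}_{\g Z_n'}(\L)$. To this end I would introduce i.i.d.\ Rademacher variables $\sigma_i$ independent of both samples, rewrite $\hat L_n'(f) - \hat L_n(f)$ as $\frac{1}{n}\sum_i [\ell(f(X_i'),Y_i') - \ell(f(X_i),Y_i)]$, and separate the ghost and training contributions so that each produces its own Rademacher complexity: the ghost-sample piece reduces to $\mathcal{R}_{\g Z_n'}(\L)$ by the standard symmetrization available because $\g Z_n'$ has independent coordinates, while the training-sample piece reduces to $\mathcal{R}_{\g Z_n}(\L)$ by a symmetrization that exploits the marginal exchangeability of each pair $(Z_i,Z_i')$ together with the identity $-\sigma_i \sim \sigma_i$ in distribution.

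The hard part will be precisely this last step. In the classical i.i.d.\ proof one inserts the $\sigma_i$ inside the sum by invoking joint exchangeability of the pairs $(Z_i,Z_i')$ across all $i$, a property that is unavailable here because $\g Z_n$ is dependent while $\g Z_n'$ is independent, so swapping $Z_i$ with $Z_i'$ on a subset of indices genuinely changes the joint distribution of $(\g Z_n,\g Z_n')$. My plan is to avoid any joint-distribution equality and instead handle the two samples asymmetrically, exploiting only the marginal exchangeability of each coordinate pair together with the independence of the $\sigma_i$ from everything else; this produces exactly one $\mathcal{R}$-term per sample rather than the $2\mathcal{R}$-term that a cruder triangle-inequality approach would yield, explaining why the theorem does not pay a factor of two in the dependent-to-independent passage. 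Once the expectation bound $\E \Phi \leq \mathcal{R}_{\g Z_n}(\L) + \mathcal{R}_{\g Z_n'}(\L)$ is in hand, substituting it into the concentration inequality from the first stage, and observing that both Rademacher complexities are deterministic constants, completes the proof.
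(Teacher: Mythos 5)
Your first two stages coincide with the paper's own proof: applying the bounded-difference inequality of Theorem~\ref{thm:boundeddiff} to $\sup_{f\in\F}\bigl(L_n(f)-\hat{L}_n(f)\bigr)$ with $c_i=B/n$, and then using the ghost sample only through the marginal identity $\E\,\ell(f(X_i'),Y_i')=\E\,\ell(f(X_i),Y_i)$ together with Jensen's inequality to get $\E\sup_{f\in\F}\bigl(L_n(f)-\hat{L}_n(f)\bigr)\leq \E\sup_{f\in\F}\frac{1}{n}\sum_{i=1}^n\bigl(\ell_f(Z_i')-\ell_f(Z_i)\bigr)$, is exactly the route of Appendix~\ref{sec:proofradsymmetrization}. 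The genuine gap is your third stage, which is the only place where something non-classical must happen and which you leave as a plan rather than an argument. You correctly identify the obstruction: flipping the sign of coordinate $i$ amounts to swapping $Z_i$ with $Z_i'$, and because $\g Z_n$ is dependent while $\g Z_n'$ has independent coordinates, such a swap changes the joint law of $(\g Z_n,\g Z_n')$, so the classical exchangeability argument is unavailable. But your proposed remedy---``handle the two samples asymmetrically, exploiting only the marginal exchangeability of each coordinate pair''---is never instantiated: you do not say how a supremum that couples all coordinates and all $f\in\F$ simultaneously is to be split so that each half yields its own Rademacher average without a factor of two. The natural asymmetric decomposition (inserting $\E\,\ell_f(Z_i')$ and separating the two halves) gives $2\mathcal{R}_{\g Z'_n}(\L)$ from the independent ghost half and then still requires a desymmetrization of the dependent half, i.e. precisely the step you have not supplied; so the plan as stated does not close. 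Note also that per-coordinate symmetry of $\ell_f(Z_i')-\ell_f(Z_i)$ alone does not license inserting the signs under the supremum: for instance, when $Z_1=\dots=Z_n$ almost surely and the $Z_i'$ are i.i.d., the sign-flipped expectation of the supremum can be strictly smaller than the unflipped one, so this is a point where real work (or an additional hypothesis) is needed, not a formality.

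For comparison, the paper's proof confronts this step directly: it asserts that, since each $\ell_f(Z_i')-\ell_f(Z_i)$ is a symmetric random variable (because $Z_i'\sim Z_i$ and $Z_i'$ is independent of $Z_i$), one may multiply by any fixed signs and average over $\g\sigma_n$ without changing $\E\sup_{f\in\F}\frac{1}{n}\sum_{i=1}^n\bigl(\ell_f(Z_i')-\ell_f(Z_i)\bigr)$, and then splits the supremum and uses $\sigma_i\sim-\sigma_i$ to obtain $\mathcal{R}_{\g Z_n}(\L)+\mathcal{R}_{\g Z'_n}(\L)$. Whether coordinate-wise symmetry by itself justifies that insertion under the coupled supremum is exactly the delicate issue you flag; since your write-up neither reproduces this argument nor provides the asymmetric alternative it promises, the proof is incomplete at its decisive step.
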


The main difference with classical Rademacher complexity-based bounds of, e.g., \cite{Bartlett02}, is due to the possibility of having  different joint distributions for $\g Z_n$ and $\g Z_n'$, which prevents us from collapsing $\mathcal{R}_{\g Z_n}(\L) +\mathcal{R}_{\g Z'_n}(\L)$ into $2\mathcal{R}_{\g Z_n}(\L)$. 
In particular, this compromises the possibility to obtain data-dependent bounds based on the empirical Rademacher complexities, since the ghost sample is not really available to compute $\hat{\mathcal{R}}_{\g Z'_n}(\L)$.

However, even in the standard i.i.d. case where we can perform such a simplification, it is common practice to upper bound the Rademacher complexity by its empirical version in a worst-case manner, which we reproduce here:
$$
\mathcal{R}_{\g Z_n}(\L) \leq \sup_{\g z_n\in\Z^n} \hat{\mathcal{R}}_{\g z_n}(\L)\qquad and\quad 
\mathcal{R}_{\g Z'_n}(\L) \leq \sup_{\g z_n\in\X^n} \hat{\mathcal{R}}_{\g z_n}(\L).
$$
Therefore, we can obtain a risk bound that does not depend on the ghost sample anymore, and which coincides with common bounds for the i.i.d. case: 
\begin{corollary}
For any $\delta\in(0,1)$, with probability at least $1-\delta$,
$$
	\forall f\in\F,\quad L_n(f) \leq \hat{L}_n(f)
	+ 2\sup_{\g z_n\in\Z^n} \hat{\mathcal{R}}_{\g z_n}(\L) + B\sqrt{\frac{\log \frac{1}{\delta}}{2n}}.
$$
\end{corollary}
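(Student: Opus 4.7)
The plan is to derive this Corollary directly from Theorem~\ref{thm:radbound} by replacing each expected Rademacher complexity by a deterministic, worst-case empirical version, exactly as sketched in the two displayed inequalities immediately preceding the statement.

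The first step is purely deterministic: for any realization $\g z_n \in \Z^n$ of $\g Z_n$, one has
$$\hat{\mathcal{R}}_{\g z_n}(\L) \leq \sup_{\g z'_n\in\Z^n} \hat{\mathcal{R}}_{\g z'_n}(\L),$$
since the right-hand side is obtained by taking the supremum over a larger set containing $\g z_n$. Because this right-hand side is a constant (depending only on $\Z$, $n$, and $\L$), taking the expectation over $\g Z_n$ preserves the inequality, and by definition of $\mathcal{R}_{\g Z_n}(\L) = \E\hat{\mathcal{R}}_{\g Z_n}(\L)$ I obtain
$$\mathcal{R}_{\g Z_n}(\L) \leq \sup_{\g z_n\in\Z^n} \hat{\mathcal{R}}_{\g z_n}(\L).$$
The same argument applied to $\g Z_n'$ in place of $\g Z_n$ yields the identical bound for $\mathcal{R}_{\g Z_n'}(\L)$, because the supremum depends only on the ambient space $\Z$ in which both samples take values, not on their joint distributions.

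The second step is the substitution. On the event of probability at least $1-\delta$ on which the conclusion of Theorem~\ref{thm:radbound} holds, I plug both deterministic upper bounds into
$$L_n(f) \leq \hat{L}_n(f) + \mathcal{R}_{\g Z_n}(\L) + \mathcal{R}_{\g Z_n'}(\L) + B\sqrt{\frac{\log\frac{1}{\delta}}{2n}},$$
which collapses $\mathcal{R}_{\g Z_n}(\L) + \mathcal{R}_{\g Z_n'}(\L)$ into $2\sup_{\g z_n\in\Z^n} \hat{\mathcal{R}}_{\g z_n}(\L)$ and gives the stated bound uniformly over $f\in\F$.

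There is no real obstacle here: the probabilistic content is entirely inherited from Theorem~\ref{thm:radbound} and the rest is a one-line relaxation. The only conceptual point worth noting is that, contrary to the i.i.d. setting, one cannot simplify $\mathcal{R}_{\g Z_n}(\L) + \mathcal{R}_{\g Z_n'}(\L)$ directly to $2\mathcal{R}_{\g Z_n}(\L)$ since the ghost sample need not share the joint distribution of the training sample; passing to the deterministic worst-case Rademacher complexity is what allows one to sidestep this issue, at the price of losing any data dependence in the resulting bound.
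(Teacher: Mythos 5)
Your proposal is correct and follows exactly the route the paper takes: the corollary is obtained from Theorem~\ref{thm:radbound} by bounding both $\mathcal{R}_{\g Z_n}(\L)$ and $\mathcal{R}_{\g Z'_n}(\L)$ by the deterministic worst-case quantity $\sup_{\g z_n\in\Z^n}\hat{\mathcal{R}}_{\g z_n}(\L)$, which by construction does not depend on the (possibly different) joint distributions of the training and ghost samples. Your remark that this is precisely what allows one to collapse the two complexity terms into a single factor of two, at the price of data dependence, matches the paper's own discussion.
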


\begin{example}[Linear and kernel regression]
Consider the squared loss, $\ell(\hat{y},y)=(y-\overline{\hat{y}})^2$, computed from a bounded output $Y\in[-M,M]$ and a clipped prediction $\overline{\hat{y}} = \min(M,\max(-M, \hat{y}))$ given by a linear model $\hat{y}=f(x)$ from $\F_{\text{lin}}=\{f : f(x) = \inner{w}{x},\ \|w\|\leq \Lambda\}$. Then, standard computations (detailed in App.~\ref{sec:radreglinyield}) yield
\begin{equation}\label{eq:radreglin}
	\sup_{\g z_n\in\Z^n} \hat{\mathcal{R}}_{\g z_n}(\L)  \leq \frac{4M\Lambda\sup_{x\in\X}\|x\|}{\sqrt{n}},
\end{equation}
which can only be improved in data-dependent bounds through the substitution of $\sqrt{\sum_{i=1}^n \|X_i\|^2 / n}$ for $\sup_{x\in\X}\|x\|$. 
If we now consider $\F$ as the ball of radius $\Lambda$ in the Reproducing Kernel Hilbert Space (RKHS) $\H$ induced by the Gaussian kernel, $K(x,x') = \exp(-\|x-x'\|/2\sigma^2)$, we have instead 
$$
	\sup_{\g z_n\in\Z^n} \hat{\mathcal{R}}_{\g z_n}(\L)  \leq \frac{4M\Lambda}{\sqrt{n}},
$$
which is exactly what the corresponding data-dependent bound would give in an i.i.d. setting. 
\end{example}

Instead of relying on worst-case estimates, another possibility is to use an upper bound on the Rademacher complexity expressed in terms of the marginal distributions of the variables $Z_i$, which coincide with those of the $Z_i'$. 

\begin{theorem}
\label{thm:radboundmarginals}
Let $\g Z_n$ denote a sequence of possibly dependent variables $Z_i=(X_i,Y_i)\in\X\times \Y$. For a bounded loss function $\ell : \Y\times\Y\to[0,B]$ and a class $\F$ of functions from $\X$ to $\Y$, let $\L=\{\ell_f : \ell_f(z) = \ell(f(x),y),\ f\in\F\}$. If 
\begin{equation}\label{eq:boundmarginals}
	\mathcal{R}_{\g Z_n}(\L)\leq \overline{\mathcal{R}}_{n}(\L)
\end{equation}
with an upper bound $\overline{\mathcal{R}}_{n}(\L)$ on the Rademacher complexity of $\L$ expressed in terms of the marginal distributions of the $Z_i$'s, then, for any $\delta\in(0,1)$, with probability at least $1-\delta$,
$$
	\forall f\in\F,\quad L_n(f) \leq \hat{L}_n(f)
	+2 \overline{\mathcal{R}}_{n}(\L) + B\sqrt{\frac{\log \frac{1}{\delta}}{2n}}.
$$ 
\end{theorem}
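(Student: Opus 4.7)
The plan is to reduce this directly to Theorem~\ref{thm:radbound} by exploiting the defining property of the ghost sample construction used throughout the paper, namely that each $Z_i'$ is built as an independent copy of $Z_i$ and therefore shares the same marginal distribution as $Z_i$. Once this is noted, the hypothesis~\eqref{eq:boundmarginals} that $\overline{\mathcal{R}}_n(\L)$ depends only on the marginals of the $Z_i$'s immediately gives a bound that applies to both the Rademacher complexity computed along $\g Z_n$ and the one computed along the ghost sample $\g Z_n'$.

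More concretely, I would proceed as follows. First, I would invoke Theorem~\ref{thm:radbound}, which yields that with probability at least $1-\delta$,
$$
\forall f \in \F,\quad L_n(f) \leq \hat{L}_n(f) + \mathcal{R}_{\g Z_n}(\L) + \mathcal{R}_{\g Z_n'}(\L) + B\sqrt{\frac{\log\frac{1}{\delta}}{2n}}.
$$
Second, I would observe that since each $Z_i' \sim Z_i$, the sequences $\g Z_n$ and $\g Z_n'$ have identical marginal distributions component-wise. Consequently, any upper bound expressed only in terms of these marginals applies to both Rademacher complexities. That is, assumption~\eqref{eq:boundmarginals} gives both $\mathcal{R}_{\g Z_n}(\L) \leq \overline{\mathcal{R}}_n(\L)$ and $\mathcal{R}_{\g Z_n'}(\L) \leq \overline{\mathcal{R}}_n(\L)$.

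Substituting these two inequalities into the bound from Theorem~\ref{thm:radbound} yields the claimed result. There is essentially no technical obstacle here; the only subtle point to articulate carefully is why the marginal bound transfers to the ghost sample despite $\g Z_n'$ not sharing the same joint distribution as $\g Z_n$. This is precisely because the hypothesis is stated in terms of the marginals alone, and the ghost sample construction preserves them by design, even while producing an independent (as opposed to dependent) sequence. The theorem is in this sense a clean statement of how the refined symmetrization from Section~\ref{sec:VCbound} avoids paying anything beyond the standard i.i.d.\ Rademacher bound whenever the relevant complexity estimate is marginal-based, which covers the important classes of linear and kernel predictors as illustrated in the preceding example.
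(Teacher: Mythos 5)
Your proposal is correct and follows exactly the paper's own argument: apply Theorem~\ref{thm:radbound} and note that, since each $Z_i'$ is an independent copy of $Z_i$, the marginal-based bound $\overline{\mathcal{R}}_{n}(\L)$ controls both $\mathcal{R}_{\g Z_n}(\L)$ and $\mathcal{R}_{\g Z'_n}(\L)$. No gaps to report.
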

\begin{proof}
Since $\overline{\mathcal{R}}_{n}(\L)$ only involves the marginal distributions of the $Z_i$, which are identical to the ones of the $Z_i'$ by construction of the ghost sample in Theorem~\ref{thm:radbound}, the bound in~\eqref{eq:boundmarginals} also implies $\mathcal{R}_{\g Z'_n}(\L)  \leq \overline{\mathcal{R}}_{n}(\L)$. Then, both Rademacher complexities in the bound of Theorem~\ref{thm:radbound} can be upper bounded by $\overline{\mathcal{R}}_{n}(\L)$, which yields the desired result.
\end{proof}

\begin{example}[Linear and kernel regression---continued]
\label{ex:linearregrad}
Consider again the squared loss with the class $\F_{\text{lin}}$ for linear regression. Then, the classically used upper bound on the Rademacher complexity is precisely of a form suitable for Theorem~\ref{thm:radboundmarginals}:
$$
	\mathcal{R}_{\g Z_n}(\L)\leq \frac{4M \Lambda \sqrt{\sum_{i=1}^n \E \|X_i\|^2}}{n} = \overline{\mathcal{R}}_{n}(\L),
$$
and, since  $X_i'\sim X_i$, 
$$
	\mathcal{R}_{\g Z'_n}(\L)\leq \frac{4M \Lambda \sqrt{\sum_{i=1}^n \E \|X'_i\|^2}}{n} = \frac{4M \Lambda \sqrt{\sum_{i=1}^n \E \|X_i\|^2}}{n} =\overline{\mathcal{R}}_{n}(\L).
$$
Thus, we obtain that, with probability at least $1-\delta$, 
$$
	\forall f\in\F_{\text{lin}},\quad L_n(f) \leq \hat{L}_n(f)
	+\frac{8 M \Lambda \sqrt{\sum_{i=1}^n \E \|X_i\|^2}}{n} + 4M^2\sqrt{\frac{\log \frac{1}{\delta}}{2n}},
$$ 
which coincides with the standard result for linear regression with i.i.d. data. In addition, similar results can also be derived for classes of kernel models with $\E \|X_i\|^2$ replaced by $\E K(X_i,X_i)$, which only depends on the marginal distribution of $X_i$. 
\end{example}

\begin{example}[Margin classifiers]\label{ex:marginclassif}
Binary margin classifiers are classifiers that implement $f(x)=\sign(g(x))$ with a real-valued function $g\in\G$. For those, the piecewise linear margin loss function is usually considered: for a margin $\gamma>0$, 
$$
	\ell_{\gamma}(g(x),y) = \min\{\gamma, \max\{0, \ (1-y g(x))/\gamma\}.
$$
Then, the contraction principle \citep{Ledoux91} yields $\mathcal{R}_{\g Z_n}(\L)\leq \mathcal{R}_{\g Z_n}(\G)/\gamma$ and an upper bound on $\mathcal{R}_{\g Z_n}(\G)$ in terms of the marginal distributions also leads to a bound on $\mathcal{R}_{\g Z_n}(\L)$ suitable for Theorem~\ref{thm:radboundmarginals}. For linear classifiers, $\G=\{g : g(x) = \inner{w}{x},\ \|w\|\leq \Lambda\}$, we can use the bound of Example~\ref{ex:linearregrad} to obtain 
$$
	\overline{\mathcal{R}}_{n}(\L)= \frac{\Lambda \sqrt{\sum_{i=1}^n \E \|X_i\|^2}}{\gamma n} ,
$$
and a bound on the probability of misclassification via $P(f(X)\neq Y) = \E\I{f(X)\neq Y} \leq \E\ell_{\gamma}(g(X),Y) = L_n(f)$, where $L_n(f)$ is bounded by Theorem~\ref{thm:radboundmarginals}. Of course, similar results hold for kernel machines by exchanging $\E \|X_i\|^2$ with $\E K(X_i,X_i)$.
\end{example}

Theorem~\ref{thm:radboundmarginals} applies also more broadly to other nonconvex loss functions for which the contraction principle does not apply as directly as in Example~\ref{ex:marginclassif}. Note that, for such losses, the empirical risk minimizer remains most often elusive and the obtention of {\em uniform} risk bounds is critical to allow for their application to the model returned by a training algorithm.

\begin{example}[Vector quantization]
Consider the problem of vector quantization in $\X\subset\H$ for some Hilbert space $\H$ as discussed in \cite{Bartlett98b,Biau08}, where one aims at finding a collection $f=(f_k)_{1\leq k\leq C}\in\H^C$ of $C$ codepoints $f_k$ that can well approximate the observations of $X_i\in\X$. The framework of Sect.~\ref{sec:framework} can be modified in the obvious manner to account for this setting by letting $Z_i=X_i$ and computing the loss from $\X^2\to[0,B]$ instead of $\Y^2\to[0,B]$.
For nearest neighbors quantizers, the error of the model $f$ is computed by the loss function
$$
	\ell(f,x) = \min_{k\in\{1,\dots,C\}} \|x - f_k\|^2,
$$
and the risk~\eqref{eq:risk} is the so-called {\em distortion of $f$}. Risk bounds in this setting were derived for the i.i.d. scenario via the Rademacher complexity in \cite{Biau08,Lauer20b}, and their computations led to bounds in terms of the marginals. For instance, for $\F=\F_0^C$, $\F_0=\{f_0 \in\H : \|f_0\|_{\H}\leq \Lambda\}$, we have
$$
	\mathcal{R}_{\g X_n}(\L) \leq \frac{2C \Lambda \sqrt{\sum_{i=1}^n \E \|X_i\|^2}}{n} + \frac{C\Lambda^2}{\sqrt{n}}= \overline{\mathcal{R}}_{n}(\L)
$$ 
and therefore a risk bound that applies in the non-i.i.d. setting via Theorem~\ref{thm:radboundmarginals}.  
\end{example}

Other examples, such as in switching regression or subspace clustering \citep{Lauer20,Lauer20b}, can be found and lead to the same conclusion: in all these settings, common bounds on the Rademacher complexity is expressed in terms of the marginal distributions, and can thus be used verbatim to produce risk bounds with dependent data. For switching regression, this leads to guarantees on the accuracy of models in switched system identification \citep{Lauer19} that are tighter than those derived in \cite{Massucci22} with mixing arguments.

More generally, 
Theorem~\ref{thm:radboundmarginals} can be compared with the results of \cite{Mohri09} that provide generic risk bounds based on Rademacher complexities and mixing arguments, used for instance by \cite{Massucci22}. For a training sequence $\g Z_n$ generated by a $\beta$-mixing process of mixing coefficient $\beta(a)$ that measures the degree of dependence between data points separated by $a$ time steps (see \cite{Bradley05} for proper definitions of these terms), and in a setting otherwise similar to Theorem~\ref{thm:radboundmarginals}, \cite{Mohri09} provides the bound:
\begin{equation}\label{eq:radboundmixing}
	\forall f\in\F,\quad L_n(f) \leq \hat{L}_n(f)
	+2 \overline{\mathcal{R}}_{\mu}(\L) + B\sqrt{\frac{\log \frac{1}{\delta - 4(\mu-1)\beta(a)}}{2\mu}},
\end{equation}
where $a$ and $\mu$ are positive integers such that $n=2a\mu$, and $\delta$ must satisfy $\delta >4(\mu-1)\beta(a)$. Theorem~\ref{thm:radboundmarginals} offers several advantages over the mixing bound~\eqref{eq:radboundmixing}. First, the mixing bound involves the so-called ``effective sample size" $\mu = n/2a < n$ instead of $n$ itself, which makes it less tight, even when focusing only on the complexity term of the bound and $\overline{\mathcal{R}}_{\mu}(\L)$ that is computed on a subset of $\mu$ points. Second, it is only applicable with values of the confidence index $\delta$ larger than a quantity that grows with $\mu$, which prevents us from reaching the ``practical certainty" of $\delta\approx 10^{-9}$ that is often needed in applications such as scenario-based optimization. Finally, the last term of~\eqref{eq:radboundmixing} involves the mixing coefficient $\beta(a)$ that is often unknown and difficult to estimate from the data itself (see \cite{Mcdonald15} for a discussion of this topic).

\subsection{Chaining and covering numbers}
\label{sec:chaining}

Chaining \citep{Talagrand14} is a generic method for estimating the empirical Rademacher complexity of a function class from its covering numbers, defined in terms of an empirical pseudo-metric. 

\begin{definition}[Pseudo-metric]
\label{def:pseudometric}
Given a sequence $\g t_n \in \mathcal{T}^n$, $d_{2,\g t_n}$ is the empirical pseudo-metric over the set of functions from $\mathcal{T}$ to $\R$ defined by
$$
	d_{2,\g t_n}(f,f') = \left( \frac{1}{n}\sum_{i=1}^n \vert f(t_i) - f'(t_i)\vert^2\right)^{\frac{1}{2}} .
$$
\end{definition}

\begin{definition}[Covering numbers] 
Given a class $\F$ of functions of $\mathcal{T}\to\R$ and a pseudo-metric $\rho$, the {\em covering number} $\mathcal{N}(\epsilon, \F, \rho)$ at scale $\epsilon$ of $\F$ for the distance $\rho$ is the smallest cardinality of the proper $\epsilon$-net $\H\subseteq\F$ of $\F$ such that $\forall f\in\F$, $\rho(f,\H) < \epsilon$.
{\em Uniform covering numbers} are defined for the pseudo-metric of Definition~\ref{def:pseudometric} by
$$
	\mathcal{N}_2(\epsilon, \F, n) = \sup_{\g t_n\in \mathcal{T}^n} \mathcal{N}(\epsilon, \F, d_{2,\g t_n} ).
$$
\end{definition}

\begin{theorem}[Chaining]
\label{thm:chaining}
Let $\F$ be a real-valued function class over $\mathcal{T}$ and, for any $\g t_n\in\mathcal{T}^n$, let $D_{\F} = \sup_{(f,f')\in\F^2} d_{2,\g t_n}(f,f')$ denote its diameter. Then, for any integer $N>0$, 
$$
	\hat{\mathcal{R}}_{\g t_n}(\F) \leq \frac{D_{\F}}{2^{N} } + 6 D_{\F} \sum_{j=1}^N 2^{-j} \sqrt{\frac{\log\mathcal{N}(D_{\F}2^{-j} , \F, d_{2,\g t_n})}{n}} .
$$
\end{theorem}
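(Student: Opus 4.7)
The plan is to apply the classical Talagrand chaining argument verbatim. Since $\hat{\mathcal{R}}_{\g t_n}(\F)$ is defined conditionally on $\g t_n$, the only randomness involved is that of the i.i.d. Rademacher variables $\g\sigma_n$, so no dependence structure in the data plays a role and the standard i.i.d. proof carries over unchanged.

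First, for each $j=0,1,\ldots,N$, I would fix a minimal $(D_{\F}2^{-j})$-net $\F_j\subseteq\F$ in the pseudo-metric $d_{2,\g t_n}$, so $|\F_j|=\mathcal{N}(D_{\F}2^{-j},\F,d_{2,\g t_n})$. Since the diameter is $D_{\F}$, one function suffices at scale $0$: $\F_0=\{f_0\}$. For each $f\in\F$, let $\pi_j(f)\in\F_j$ be a nearest element, with $\pi_0(f)=f_0$. Then the telescoping identity
$$
f \;=\; \pi_0(f) + \sum_{j=1}^N \bigl(\pi_j(f)-\pi_{j-1}(f)\bigr) + \bigl(f-\pi_N(f)\bigr)
$$
lets me split the Rademacher average. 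Taking $\sup_{f\in\F}$ and then $\E$ over $\g\sigma_n$, the $\pi_0=f_0$ contribution vanishes since $\E\sigma_i=0$, and the residual is controlled by Cauchy-Schwarz: $\bigl|\frac{1}{n}\sum_i\sigma_i(f-\pi_N(f))(t_i)\bigr|\leq d_{2,\g t_n}(f,\pi_N(f))\leq D_{\F}2^{-N}$, matching the first term of the stated bound.

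For each chain link $j\geq 1$, the difference $\pi_j(f)-\pi_{j-1}(f)$ ranges over a finite set of size at most $|\F_j|\cdot|\F_{j-1}|\leq|\F_j|^2$ (covering numbers are non-increasing in the radius), with $d_{2,\g t_n}$-norm bounded by $D_{\F}2^{-j}+D_{\F}2^{-(j-1)}=3D_{\F}2^{-j}$ via the triangle inequality. Applying Massart's finite-class lemma to the resulting finite Rademacher maximum then gives the per-link estimate
$$
\E\sup_{f\in\F}\frac{1}{n}\sum_{i=1}^n\sigma_i\bigl(\pi_j(f)-\pi_{j-1}(f)\bigr)(t_i) \;\leq\; \frac{6\, D_{\F}\, 2^{-j}\sqrt{\log|\F_j|}}{\sqrt{n}},
$$
where the factor $6$ arises as the product of the $3D_{\F}2^{-j}$ norm bound and the $\sqrt{2\log|\F_j|^2}=2\sqrt{\log|\F_j|}$ from Massart's lemma. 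Summing for $j=1,\ldots,N$ and adding the residual yields exactly the claimed bound. The only real obstacle is the constant bookkeeping, that is, making sure the triangle-inequality factor, the squared cardinality inside the logarithm, and the $\sqrt{2}$ from Massart combine precisely to the $6$ appearing in the statement; everything else is mechanical and requires no probabilistic input beyond what is already available for i.i.d. Rademacher sums.
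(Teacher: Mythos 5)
Your proposal is correct: the telescoping decomposition, the Cauchy--Schwarz bound $D_{\F}2^{-N}$ on the residual, and Massart's finite-class lemma applied to the at most $\lvert\F_j\rvert^2$ link differences of $d_{2,\g t_n}$-norm at most $3D_{\F}2^{-j}$ combine exactly as you say, the constant $6$ arising as $3\times\sqrt{2\log\lvert\F_j\rvert^2}/\sqrt{\log\lvert\F_j\rvert}=3\times 2$. The paper does not prove Theorem~\ref{thm:chaining} itself but states it as a classical result citing \cite{Talagrand14}, and your argument is precisely the standard Dudley-type chaining proof it rests on (note only the harmless technicality at scale $j=0$: a single $f_0$ works because $d_{2,\g t_n}(f,f_0)\leq D_{\F}$ by the diameter, so the $j=1$ link bound $3D_{\F}2^{-1}$ holds regardless of the strict inequality in the paper's covering definition).
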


As we have seen in the examples above, the Rademacher complexity of the loss class $\L$ can usually be bounded in terms of the one of the function class $\F$ via contraction arguments that merely introduce a factor 
$L_{\varphi}$.\footnote{The factor $L_{\varphi}$ is the Lipschitz constant of the univariate function $\varphi$ used to compute the loss $\ell(f(x), y)=\varphi(u)$, where $u$ is for instance $yg(x)$ for margin classification or $f(x)-y$ for regression.} Thus, if we let $C(\g X_n,\F)$ denote the bound on the empirical Rademacher complexity of $\F$ given by Theorem~\ref{thm:chaining}, then after taking expectation and using Jensen's inequality, we obtain
\begin{align}\nonumber
	\mathcal{R}_{\g Z_n}(\L) &\leq L_{\varphi} \E C(\g X_n, \F) \\
	&= \frac{L_{\varphi}D_{\F}}{2^{N}} + 6L_{\varphi} D_{\F}\sum_{j=1}^N 2^{-j}\sqrt{\frac{\log \E\mathcal{N}(D_{\F}2^{-j}, \F, d_{2,\g Z_n})}{n}},\label{eq:chainingexp}
\end{align}
and a bound on the Rademacher complexity given in terms of expected covering numbers. This bound thus depends on the distribution of $\g Z_n$ and would differ on the ghost sample $\g Z'_n$. However, in many cases, the covering numbers are merely bounded by their uniform (worst-case) counterpart, i.e., 
$$
\E\mathcal{N}(D_{\F}2^{-j}, \F, d_{2,\g X_n}) \leq \sup_{\g x_n\in\X^n}\mathcal{N}(D_{\F}2^{-j}, \F, d_{2,\g x_n}) = \mathcal{N}_2(D_{\F}2^{-j}, \F, n),
$$
which are themselves bounded in terms of the fat-shattering dimension (see, e.g., \cite{Alon97,Mendelson02,Mendelson03}). 
Then, the resulting bound,
$$
\overline{\mathcal{R}}_{n}(\F) =  \frac{L_{\varphi}D_{\F}}{2^{N}} + 6L_{\varphi}D_{\F}\sum_{j=1}^N 2^{-j}\sqrt{\frac{\log \mathcal{N}_2(D_{\F}2^{-j}, \F, n)}{n}},
$$
on the Rademacher complexity on the training sample applies similarly to the one on the ghost sample, $\mathcal{R}_{\g Z'_n}(\F)$, and this leads, with Theorem~\ref{thm:radboundmarginals}, to the same result one would have obtained under an i.i.d. assumption.

In addition, if one is reluctant to use the uniform (worst-case) covering numbers, many bounds in the literature on the expected covering numbers are given in terms of the marginal distributions. 

For instance, in the multi-category classification setting,~\cite{Bartlett17} derived a covering numbers bound for a class $\L$ of margin loss functions induced by a class of spectrally-regularized deep neural networks, which is of the form
$$
	\log\mathcal{N}(\epsilon, \L, d_{2,\g Z_n}) \leq \frac{A\sum_{i=1}^n\|X_i\|^2}{\epsilon^2}.
$$
This directly yields a bound on the expected log covering numbers that could be used in~\eqref{eq:chainingexp} and that is expressed in terms of the marginal distributions, and thus is valid for both the training and the ghost samples.\footnote{Note that when formulating~\eqref{eq:chainingexp}, we could have applied Jensen's inequality only to the square root and express the bound in terms of expected log coverging numbers.}  

\section{Application to scenario-based optimization}
\label{sec:scenarioopt}

A robust optimization program can typically be written as 
\begin{align}\label{eq:robust}
 & \min_{\theta\in\Theta} J(\theta)\\
	\text{s.t.}\ & f(x,\theta) \leq 0 ,\quad \forall x\in\X,\nonumber
\end{align} 
with an infinite number of constraints induced by the set $\X$ that accounts for the uncertainties in the parameters of the problem. 
Scenario-based optimization computes solutions to~\eqref{eq:robust} with probabilistic guarantees of feasibility by solving random programs of the form
\begin{align}\label{eq:randprog}
	\hat{\theta} &\in\argmin_{\theta\in\Theta} J(\theta)\\
	\text{s.t.}\ & f(X_i,\theta) \leq 0 ,\quad i=1,\dots, n,\nonumber
\end{align}
where the uncertainties are sampled in order to yield a finite number of constraints. 
Note that in order to avoid confusion, we kept the notations of the rest of the paper and denote the random quantity as $X$ and the optimization variable as $\theta$ (instead of $x$ as is usually done in the robust optimization literature). Here, $\g X_n=(X_i)_{1\leq i\leq n}$ is a random sample  of ``scenarios" $X_i\in\X$ and $\Theta$ is a general and deterministic feasible set (think of $\Theta=\R^d$ for instance). 

The two critical issues in scenario-based optimization are to estimate the probability of violation of the computed solution,
$$
	L_n(\hat{\theta}) = P\{ f(X_1, \hat{\theta}) > 0 \},
$$
where we assume stationarity of $\g X_n$, and the number of scenarios one should sample to guarantee a certain reliability, i.e., the sample complexity of~\eqref{eq:randprog}, defined for given $\epsilon$, $\delta\in(0,1)$ as the smallest $n$ such that  
\begin{equation}\label{eq:pacscenario}
	P\{ L_n(\hat{\theta}) > \epsilon \} \leq \delta
\end{equation}
holds.

For independent scenarios, \cite{Alamo09} proposed upper bounds on the probability of violation and sample complexity estimates based on the VC-dimension of the set of indicator functions 
\begin{equation}\label{eq:setindicator}
	\mathcal{I} = \{I_{\theta} \in\{0,1\}^{\X} : I_{\theta}(x) = \I{f(x,\theta) > 0} ,\ \theta\in\Theta  \} .
\end{equation}
Random programs for which the VC-dimension cannot be bounded can be handled with other results from \cite{Lauer23} that are instead based on the Rademacher complexity of the class of margin loss functions, 
\begin{equation}\label{eq:setmarginloss}
	\L_{\gamma,\Theta} = \{\ell_{\theta} \in[0,1]^{\X} : \ell_{\theta}(x) = \min\{1,\max\{0, 1 + f(x,\theta) /\gamma\}\} ,\ \theta\in\Theta  \} ,
\end{equation}
for a margin $\gamma>0$.  

We generalize these results to the case of dependent scenarios below.

\subsection{Sample complexity of random programs with dependent scenarios and finite VC-dimension}

We now consider the case of dependent scenarios, i.e., when $\g X_n$ is a sample of dependent variables. Building on the results of Section~\ref{sec:VCbound}, we obtain the following sample complexity, which we only state for the zero-error case that is rather standard in the framework of scenario optimization. 
\begin{theorem}\label{thm:VCscenario}
Let $\g X_n$ denote a stationary sequence of random variables of values in $\X^n$. Let $d_{VC}$ denote the VC-dimension of the set~\eqref{eq:setindicator}. Assume that, for any $\g x_n\in\X^n$,  the random program~\eqref{eq:randprog} is feasible and that an algorithm computes a feasible point $\hat{\theta}$. Then, for any $\delta\in(0,1)$, with probability at least $1-\delta$,
$$
	P\{ f(X_1, \hat{\theta}) > 0 \} \leq \frac{4 \log \Pi_{\mathcal{I}}(2n) + \log \frac{4}{\delta}}{n}
		\leq \frac{4 d_{VC}\log (\frac{2\e n}{d_{VC}}) + \log \frac{4}{\delta}}{n},
$$
and, for any $\epsilon\in(0,1)$, $\delta\in(0,1)$, the sample complexity of~\eqref{eq:randprog} is no more than 
$$
	n(\epsilon,\delta) = \frac{5}{\epsilon} \left( d_{VC}\log \frac{40}{\epsilon} + \log \frac{4}{\delta} \right).
$$
\end{theorem}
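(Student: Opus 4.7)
The plan is to cast scenario optimization as a classification problem so that Theorem~\ref{thm:generalVCbound} applies directly. I would identify the class $\mathcal{I}$ from~\eqref{eq:setindicator} with a class of binary classifiers on $\X$, and, by stationarity, recognize that the risk of the indicator $I_\theta$ under the zero-one loss~\eqref{eq:zerooneloss} against the constant target $Y_i \equiv 0$ is precisely $\E I_\theta(X_1) = P\{f(X_1,\theta) > 0\}$, while its empirical counterpart is $\hat L_n(\theta) = \frac{1}{n}\sum_{i=1}^n \I{f(X_i,\theta) > 0}$. Because the algorithm returns a feasible $\hat\theta$ for the sampled program~\eqref{eq:randprog}, every constraint $f(X_i,\hat\theta) \leq 0$ is satisfied, and hence $\hat L_n(\hat\theta) = 0$.

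Applying Theorem~\ref{thm:generalVCbound} to $\mathcal{I}$ on the stationary sample $\g X_n$, with probability at least $1-\delta$ the bound
\[
L_n(\theta) \leq \hat L_n(\theta) + 2\sqrt{\hat L_n(\theta)\,\frac{\log\Pi_{\mathcal{I}}(2n) + \log\frac{4}{\delta}}{n}} + 4\,\frac{\log\Pi_{\mathcal{I}}(2n) + \log\frac{4}{\delta}}{n}
\]
holds for every $\theta \in \Theta$ simultaneously. Specializing to $\theta = \hat\theta$ and using $\hat L_n(\hat\theta) = 0$ eliminates the first two terms and yields the first inequality in the statement; the second inequality then follows from the Sauer--Shelah bound $\Pi_{\mathcal{I}}(2n) \leq (2en/d_{VC})^{d_{VC}}$.

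For the sample-complexity claim, it remains to exhibit an $n$ that guarantees
\[
4\,\frac{d_{VC}\log(2en/d_{VC}) + \log\frac{4}{\delta}}{n} \leq \epsilon,
\]
since this forces $L_n(\hat\theta) \leq \epsilon$ on the high-probability event above. I would plug the ansatz $n(\epsilon,\delta) = \frac{5}{\epsilon}\bigl(d_{VC}\log\frac{40}{\epsilon} + \log\frac{4}{\delta}\bigr)$ into the left-hand side and control the implicit $\log(2en/d_{VC})$ term via a routine logarithmic inequality, the same fixed-point manipulation employed in \cite{Alamo09,Lauer23} for the i.i.d. case. Since our risk bound has exactly the same functional form as theirs, their inversion argument transfers verbatim.

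The first claim is essentially bookkeeping once the problem is recast as classification; the only genuinely delicate step is the sample-complexity inversion, which boils down to constant-tracking in a standard inequality of the type $c(d_{VC}\log(c'n/d_{VC}) + \log(1/\delta))/n \leq \epsilon$. No new concentration or symmetrization argument is needed beyond what Theorem~\ref{thm:generalVCbound} already provides, and in particular the dependence structure of $\g X_n$ enters the proof only through that theorem.
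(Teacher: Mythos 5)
Your proposal matches the paper's proof essentially step for step: reformulate Theorem~\ref{thm:generalVCbound} for the loss $\I{f(x,\theta)>0}$ with the class $\mathcal{I}$, use feasibility to get $\hat L_n(\hat\theta)=0$ so only the $O(1/n)$ term survives, bound the growth function via Sauer's lemma, and then invert $4\bigl(d_{VC}\log(2\e n/d_{VC})+\log\frac{4}{\delta}\bigr)/n\leq\epsilon$ using the logarithmic inversion of \cite{Alamo09} (the paper invokes their Theorem~6 with the suboptimal choice $\mu=5$, which produces exactly the constants $5/\epsilon$ and $40/\epsilon$). The only difference is that you defer the constant-tracking in that last inversion to the cited reference rather than spelling it out, which is acceptable since the functional form is identical.
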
  
\begin{proof}
Theorem~\ref{thm:generalVCbound} can be reformulated for the loss $\ell(\theta,x) = \I{f(x,\theta) > 0}$ where $f\in\F$ is identified with $\theta\in\Theta$ and the growth function and VC-dimension of $\F$ are replaced by those of $\mathcal{I}$~\eqref{eq:setindicator}. This yields the bound on the probability of violation above under the assumption that $\hat{\theta}$ is a feasible point for~\eqref{eq:randprog}, i.e., $\hat{L}_n(\hat{\theta}) = 0$. 

Then,~\eqref{eq:pacscenario} holds as soon as 
$$
	 \frac{4 \log \Pi_{\mathcal{I}}(2n) + \log \frac{4}{\delta}}{n} \leq  \epsilon ,
$$
which is implied by
$$
	4 \left(\frac{2\e n}{d_{VC}}\right)^{d_{VC}}\exp(- n\epsilon/4) \leq \delta.
$$
Using Theorem~6 in \cite{Alamo09}, this can be ensured by setting 
$$
	n\geq \inf_{\mu>1}\frac{4}{\epsilon}\frac{\mu}{ \mu - 1} \left( \log \frac{4}{\delta} + d_{VC}\log \frac{8\mu}{\epsilon} \right),
$$
in which the suboptimal choice $\mu=5$ suffices to conclude.
\end{proof}

Here, Theorem~\ref{thm:VCscenario} yields the same guarantees as \cite{Alamo09} for scenario optimization, but applies more broadly to dependent scenarios.  

\subsection{Margin-based sample complexity of pseudo-linear random programs}

For random programs with infinite (or merely too large) VC-dimension, we instead follow the margin-based approach of \cite{Lauer23} which, given a margin parameter $\gamma>0$, implements the following random program instead of~\eqref{eq:randprog}:
\begin{align}\label{eq:randprogmargin}
	\hat{\theta} &\in\argmin_{\theta\in\Theta} J(\theta)\\
	\text{s.t.}\ & f(X_i,\theta) \leq -\gamma ,\quad i=1,\dots, n.\nonumber
\end{align}
More specifically, we focus here on problems in which the constraint function is of the form 
\begin{equation}\label{eq:fmax}
	f(x,\theta) = \max_{k\in\{1,\dots,C\}} f_k(x,\theta)
\end{equation}
for a collection of $C$ functions 
\begin{equation}\label{eq:fk}
	f_k(x,\theta) = \psi_k(x)^\top \phi_k(\theta) + \eta_k(x),\quad k=1,\dots,C,
\end{equation}
based on functions $\psi_k:\X\to\R^{n_k}$, $\phi_k:\Theta\to\R^{n_k}$ and $\eta_k:\X\to\R$.\footnote{The results actually hold for a more general form of composition in~\eqref{eq:fmax}, as detailed in~\cite{Lauer23}.} Then, combining the contraction principle (with Lipschitz constant $1/\gamma$) and Theorem~2 in \cite{Lauer23} yields
\begin{equation}\label{eq:radboundscenario}
	\mathcal{R}_{\g X_n}(\L_{\gamma,\Theta}) \leq \overline{\mathcal{R}}_n(\L_{\gamma,\Theta}) = \frac{1}{\gamma}\sum_{k=1}^C \frac{\tau_k \Lambda_k}{\sqrt{n}},
\end{equation}
where $\L_{\gamma,\Theta}$ is as in~\eqref{eq:setmarginloss}, $\tau_k = \sup_{x\in\X} \|\psi_k(x)\|$, and $\Lambda_k =\sup_{\theta\in\Theta} \|\phi_k(\theta)\|$. Note that this bound holds similarly for any ghost sample $\g X_n' \in\X^n$, which leads to the following.
\begin{theorem}\label{thm:radscenario}
Let $\g X_n$ denote a stationary sequence of random variables of values in $\X^n$. Let $\tau_k = \sup_{x\in\X} \|\psi_k(x)\|$ and $\Lambda_k =\sup_{\theta\in\Theta} \|\phi_k(\theta)\|$. Fix the margin parameter $\gamma>0$ and assume that, for any $\g x_n\in\X^n$,  the random program~\eqref{eq:randprogmargin} is feasible and that an algorithm computes a feasible point $\hat{\theta}$ satisfying the margin condition $f(x_i,\hat{\theta}_i) < -\gamma$, $i=1,\dots,n$. Then,
for any $\delta\in(0,1)$, with probability at least $1-\delta$, 
$$
	P\{ f(X,\hat{\theta})>0 \} \leq \frac{2}{\gamma}\sum_{k=1}^C \frac{\tau_k \Lambda_k}{\sqrt{n}} + \sqrt{\frac{\log \frac{1}{\delta}}{2n}},
$$
and, for any $\epsilon\in(0,1)$, $\delta\in(0,1)$, the sample complexity of~\eqref{eq:randprog} is no more than 
$$
	n(\epsilon,\delta) = \frac{1}{\epsilon^2}\left(\frac{2}{\gamma}\sum_{k=1}^C \tau_k \Lambda_k + \sqrt{\log \frac{1}{\delta}}\right)^2 .
$$
\end{theorem}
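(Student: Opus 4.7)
The plan is to invoke Theorem~\ref{thm:radboundmarginals} on the margin loss class $\L_{\gamma,\Theta}$ from~\eqref{eq:setmarginloss}, reformulated for the scenario setting in which $Z_i = X_i$ and the loss is $\ell_\theta(x)\in[0,1]$. The key point is that the bound~\eqref{eq:radboundscenario} on $\mathcal{R}_{\g X_n}(\L_{\gamma,\Theta})$ is fully deterministic: the quantities $\tau_k = \sup_{x\in\X}\|\psi_k(x)\|$ and $\Lambda_k=\sup_{\theta\in\Theta}\|\phi_k(\theta)\|$ depend neither on the joint distribution of $\g X_n$ nor on its marginals. Hypothesis~\eqref{eq:boundmarginals} is therefore trivially satisfied with $\overline{\mathcal R}_n(\L_{\gamma,\Theta}) = \frac{1}{\gamma}\sum_{k=1}^C \tau_k\Lambda_k/\sqrt{n}$, and Theorem~\ref{thm:radboundmarginals} (with $B=1$) yields that, with probability at least $1-\delta$,
\[
	L_n(\theta) \leq \hat L_n(\theta) + \frac{2}{\gamma}\sum_{k=1}^C \frac{\tau_k\Lambda_k}{\sqrt{n}} + \sqrt{\frac{\log(1/\delta)}{2n}}
\]
uniformly over $\theta\in\Theta$.

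The next step is to convert this uniform risk bound into a probability-of-violation bound on the specific feasible point $\hat\theta$ computed by the algorithm. The strict margin feasibility assumption $f(X_i,\hat\theta)<-\gamma$ forces $1+f(X_i,\hat\theta)/\gamma<0$ and hence $\ell_{\hat\theta}(X_i)=0$ for every $i$, so $\hat L_n(\hat\theta)=0$. Conversely, the pointwise inequality $\I{f(x,\hat\theta)>0}\leq \ell_{\hat\theta}(x)$ holds because $f(x,\hat\theta)>0$ forces the clipped margin loss to equal $1$. Using stationarity of $\g X_n$ to write $L_n(\hat\theta)=\E\ell_{\hat\theta}(X_1)$ and combining, one obtains
\[
	P\{f(X_1,\hat\theta)>0\} = \E\I{f(X_1,\hat\theta)>0} \leq L_n(\hat\theta) \leq \frac{2}{\gamma}\sum_{k=1}^C \frac{\tau_k\Lambda_k}{\sqrt{n}} + \sqrt{\frac{\log(1/\delta)}{2n}},
\]
which is the first assertion.

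For the sample complexity, factoring out $1/\sqrt{n}$ shows that~\eqref{eq:pacscenario} is implied by
\[
	\frac{1}{\sqrt{n}}\left(\frac{2}{\gamma}\sum_{k=1}^C \tau_k\Lambda_k + \sqrt{\frac{\log(1/\delta)}{2}}\right) \leq \epsilon,
\]
and the stated value $n(\epsilon,\delta) = \epsilon^{-2}\bigl(\tfrac{2}{\gamma}\sum_k\tau_k\Lambda_k+\sqrt{\log(1/\delta)}\bigr)^2$ follows by the trivial upper bound $\sqrt{\log(1/\delta)/2}\leq\sqrt{\log(1/\delta)}$. There is no real obstacle in this argument: the entire proof is a direct application of Theorem~\ref{thm:radboundmarginals}, and the only subtlety worth highlighting is the observation that a Rademacher bound with no distributional dependence automatically qualifies as ``expressed in terms of the marginals,'' which makes the extension from independent to dependent scenarios cost-free.
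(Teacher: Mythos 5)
Your proposal is correct and follows essentially the same route as the paper: the paper applies Theorem~\ref{thm:radbound} and notes that the distribution-free bound~\eqref{eq:radboundscenario} holds for the ghost sample as well, which is exactly what your invocation of Theorem~\ref{thm:radboundmarginals} with a deterministic $\overline{\mathcal{R}}_n(\L_{\gamma,\Theta})$ amounts to. Your handling of the margin feasibility ($\hat L_n(\hat\theta)=0$), the pointwise bound $\I{f(x,\hat\theta)>0}\leq\ell_{\hat\theta}(x)$, and the sample-complexity computation (including the harmless relaxation $\sqrt{\log(1/\delta)/2}\leq\sqrt{\log(1/\delta)}$) all match the paper's argument.
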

\begin{proof}
Apply Theorem~\ref{thm:radbound} to the loss class~\eqref{eq:setmarginloss} and use~\eqref{eq:radboundscenario} to bound the two Rademacher complexities. Then, the bound on the probability of violation follows from the feasibility of $\hat{\theta}$, that ensures that $\hat{L}_n(\hat{\theta})$ computed with the margin loss $\ell_{\gamma}(\theta,x)= \min\{1,\max\{0, 1 + f(x,\theta) /\gamma\}\}$ is zero, and the fact that $P\{ f(X,\hat{\theta})>0 \} \leq P\{ f(X,\hat{\theta})>-\gamma \} = L_n(\hat{\theta})$.

Then,~\eqref{eq:pacscenario} holds as soon as
$$
	\frac{2}{\gamma}\sum_{k=1}^C \frac{\tau_k \Lambda_k}{\sqrt{n}} + \sqrt{\frac{\log \frac{1}{\delta}}{2n}} \leq \epsilon,
$$
which is ensured whenever $n\geq n(\epsilon,\delta)$.
\end{proof}

Again, the proposed method yields with Theorem~\ref{thm:radscenario} the same guarantees as \cite{Lauer23} for scenario optimization, but in a more general setting that allows the sampling of dependent scenarios.

\section{Conclusions}

This paper explored the possible extensions of standard risk bounds to dependent data. This resulted in a number of uniform risk bounds applicable to a wide range of problems. Compared with other approaches from the literature for deriving uniform bounds without assuming independence, the proposed method provides tighter bounds and does not rely on a mixing condition difficult to verify in practice. 

Yet, one drawback of the proposed approach is the lack of data-dependent counterparts of the bounds. However, as seen on a number of examples, data-dependent bounds do not offer improved guarantees in several classical cases. In addition, regarding the motivating application of scenario-based optimization, where the focus is on the sample complexity, data-dependent bounds are irrelevant. In this context, we showed that standard results so far limited to the i.i.d. case also hold with dependent scenarios. This provides the basis for a refined analysis of several problems in control theory such as that of \cite{Wang21}, and the basic tool for investigating new sampling strategies in robust optimization.

Another open issue arises in scenario-based optimization, where the standard bounds for {\em convex} random programs \citep{Campi08,Calafiore10} heavily rely on a binomial tail bound and independence. As these follow a proof scheme rather different than that of learning theory bounds, the current work does not allow their extension to dependent scenarios, which would allow for a straightforward strengthening of the results of \cite{Wang21}.

\appendix

\section{Proof of Theorem~\ref{thm:boundeddiff}}
\label{sec:proofboundeddiff}

Let $(W_i)_{1\leq i\leq n}$ be a random sequence adapted to $(\A_i)$, i.e., a sequence of real-valued $\A_i$-measurable random variables. Recall that an $\A_{i-1}$-measurable variable is said to be {\em predictable}.
Define 
$$
	S_n = \sum_{i=1}^n W_i .
$$

We will first state a few intermediate results on generic variables $W_i$ that are necessary for the proof of Theorem~\ref{thm:boundeddiff}.

\begin{lemma}[Hoeffding's lemma, slightly extended]\label{lem:hoeffding}
For predictable random variables $L_i\leq W_i\leq U_i$ with $\E[W_i \mid \A_{i-1}] = 0$, 
$$
	\E [ e^{\beta W_i} \mid \A_{i-1} ] \leq  e^{\beta^2 (U_i-L_i)^2 /8}.
$$
\end{lemma}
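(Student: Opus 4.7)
The plan is to mimic the standard proof of Hoeffding's lemma, with the only adaptation being that expectations are taken conditionally on $\A_{i-1}$ and that $L_i$, $U_i$ behave as constants under this conditioning since they are predictable.

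First I would invoke the convexity of $x\mapsto e^{\beta x}$ on $[L_i,U_i]$ to write, pointwise,
$$
e^{\beta W_i} \leq \frac{U_i - W_i}{U_i - L_i}\, e^{\beta L_i} + \frac{W_i - L_i}{U_i - L_i}\, e^{\beta U_i}.
$$
Since $L_i$ and $U_i$ are $\A_{i-1}$-measurable, taking conditional expectation given $\A_{i-1}$ and using $\E[W_i\mid\A_{i-1}]=0$ pulls out all the factors involving $L_i, U_i$ and yields
$$
\E[e^{\beta W_i}\mid \A_{i-1}] \leq \frac{U_i}{U_i - L_i}\, e^{\beta L_i} - \frac{L_i}{U_i - L_i}\, e^{\beta U_i}.
$$
Note the assumption $L_i \leq W_i \leq U_i$ combined with $\E[W_i\mid\A_{i-1}]=0$ forces $L_i\leq 0\leq U_i$ almost surely, so the convex-combination weights are well-defined.

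Next I would reduce to the classical one-variable bound. Setting $p = -L_i/(U_i-L_i)\in[0,1]$, $h = U_i-L_i$, and $u = \beta h$, the right-hand side rewrites as $e^{\varphi(u)}$ with
$$
\varphi(u) = -pu + \log(1 - p + p\, e^u).
$$
A direct computation gives $\varphi(0)=0$, $\varphi'(0)=0$, and $\varphi''(u) = \frac{p(1-p)e^u}{(1-p+pe^u)^2} \leq \tfrac{1}{4}$, the last inequality being the standard bound $ab/(a+b)^2\leq 1/4$ applied to $a=1-p$, $b=pe^u$. Taylor's theorem with integral remainder then gives $\varphi(u) \leq u^2/8$, so
$$
\E[e^{\beta W_i}\mid \A_{i-1}] \leq e^{u^2/8} = e^{\beta^2(U_i-L_i)^2/8}.
$$
All steps hold almost surely; $p$ and $h$ are $\A_{i-1}$-measurable, which is why the reduction to the deterministic scalar bound on $\varphi$ is legitimate inside the conditional expectation.

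There is no real obstacle here: the only point requiring care is verifying that predictability of $L_i$ and $U_i$ licences treating them as constants when evaluating $\E[\,\cdot\mid\A_{i-1}]$, which is just the pull-out property of conditional expectation. The rest is the textbook argument, whose two ingredients (convexity and the scalar inequality $\varphi(u)\leq u^2/8$) are independent of whether we work unconditionally or conditionally.
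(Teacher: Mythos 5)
Your proof is correct and follows essentially the same route as the paper's: convexity of the exponential, the pull-out property of conditional expectation for the predictable bounds $L_i,U_i$, and the scalar bound $\varphi(u)\leq u^2/8$ (your $\varphi$ is exactly the paper's function $h$ after the substitution $p=-L_i/(U_i-L_i)$). The added details (noting $L_i\leq 0\leq U_i$ and spelling out the second-derivative/Taylor argument) only make the argument more explicit.
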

\begin{proof}
Replace the constant bounds by $L_i$, $U_i$ and the expectation by the conditional expectation in the standard proof of \cite{Hoeffding63}: by convexity of the exponential, 
$$
e^{\beta W_i} \leq \frac{U_i - W_i}{U_i-L_i}e^{\beta L_i} + \frac{W_i - L_i}{U_i-L_i}e^{\beta U_i}
$$
and
$$
	\E [ e^{\beta W_i} \mid \A_{i-1} ] \leq \E \left[\left. \frac{U_i - W_i}{U_i-L_i}e^{\beta L_i} \right\vert \A_{i-1} \right] + \E\left[\left.\frac{W_i - L_i}{U_i-L_i}e^{\beta U_i}\right\vert \A_{i-1} \right] .
$$
Then, use the fact that the $L_i$ and $U_i$ are predictable and $\E[W_i \mid \A_{i-1}] = 0$: 
\begin{align*}
	\E [ e^{\beta W_i} \mid \A_{i-1} ] &\leq  \frac{U_i - \E [ W_i \mid \A_{i-1} ] }{U_i-L_i}e^{\beta L_i}  + \frac{\E[W_i \mid \A_{i-1}] - L_i}{U_i-L_i}e^{\beta U_i} \\
	&= \frac{U_i}{U_i-L_i}e^{\beta L_i} + \frac{- L_i}{U_i-L_i}e^{\beta U_i}\\
	&= e^{h(\beta(U_i-L_i))}
\end{align*}
with $h(u) = uL_i/(U_i-L_i) + \log(1 + L_i[(1-e^u ) / (U_i-L_i) ] )$. By computing derivatives and the Taylor expansion, we know that $h(u) \leq u^2/8$, which completes the proof.
\end{proof}

\begin{lemma}[Lemma 2.4 in \cite{vanDeGeer07}]\label{lem:supermart}
Assume $\E[W_i \mid \A_{i-1}] = 0$ and $L_i\leq W_i\leq U_i$ with $L_i$ and $U_i$ that are predictable (i.e., $\A_{i-1}$-measurable). For any $\beta>0$, the sequence of random variables
$$
	\zeta_n(\beta) = \exp\left(\beta S_n - \beta^2 \sum_{i=1}^n (U_i-L_i)^2/8\right),\quad n=1, 2, \dots, 
$$
is a supermartingale, i.e., $\E[\zeta_n(\beta) \mid \A_{n-1}] \leq \zeta_{n-1}(\beta)$.
\end{lemma}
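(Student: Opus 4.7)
The plan is to establish the supermartingale property by a direct one-step computation, factoring $\zeta_n(\beta)$ into a product of $\zeta_{n-1}(\beta)$ with an incremental factor, and then reducing to the conditional moment-generating bound already established in Lemma~\ref{lem:hoeffding}.

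First, I would rewrite
$$
\zeta_n(\beta) = \zeta_{n-1}(\beta) \cdot \exp\!\left( \beta W_n - \frac{\beta^2 (U_n - L_n)^2}{8} \right),
$$
which follows just by separating the $n$-th term in each of the two sums appearing inside the exponential defining $\zeta_n(\beta)$.

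Next, I would check measurability to see which factors come out of the conditional expectation. The random variable $\zeta_{n-1}(\beta)$ is a function of $W_1,\ldots,W_{n-1}$ and of $(L_i,U_i)_{1 \le i \le n-1}$; since $W_i$ is $\A_i$-measurable for $i\le n-1$ and each $(L_i,U_i)$ is $\A_{i-1}$-measurable by predictability, all of these are $\A_{n-1}$-measurable. Similarly, the factor $\exp(-\beta^2 (U_n - L_n)^2/8)$ is $\A_{n-1}$-measurable because the pair $(L_n, U_n)$ is predictable. Thus, pulling these out of the conditional expectation yields
$$
\E[\zeta_n(\beta) \mid \A_{n-1}] = \zeta_{n-1}(\beta) \cdot e^{-\beta^2 (U_n - L_n)^2 / 8} \cdot \E\!\left[ e^{\beta W_n} \mid \A_{n-1} \right].
$$

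Finally, I would invoke Lemma~\ref{lem:hoeffding} applied at step $n$: since $W_n$ is centered conditionally on $\A_{n-1}$ and sandwiched between the predictable bounds $L_n \le W_n \le U_n$, we get $\E[e^{\beta W_n}\mid \A_{n-1}] \le e^{\beta^2 (U_n - L_n)^2 /8}$. The two exponentials then cancel, leaving $\E[\zeta_n(\beta)\mid \A_{n-1}] \le \zeta_{n-1}(\beta)$, which is the claimed supermartingale property. There is no real obstacle here once Lemma~\ref{lem:hoeffding} is in hand; the only care required is the measurability bookkeeping, which the predictability hypothesis on $(L_i, U_i)$ is precisely designed to handle.
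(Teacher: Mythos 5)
Your proposal is correct and follows essentially the same route as the paper's own proof: factor out the $\A_{n-1}$-measurable part (using adaptedness of the $W_i$ and predictability of the $(L_i,U_i)$) and apply the conditional Hoeffding lemma (Lemma~\ref{lem:hoeffding}) to the remaining increment. The only cosmetic difference is that you pull the deterministic-given-$\A_{n-1}$ factor $e^{-\beta^2(U_n-L_n)^2/8}$ outside the conditional expectation before invoking the moment-generating bound, whereas the paper keeps it inside and bounds the whole conditional expectation by $1$; these are equivalent.
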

\begin{proof}
\begin{align*}
	\E&[\zeta_n(\beta) \mid \A_{n-1}] \\
	&=  \E\left[\exp\left(\beta \sum_{i=1}^n (W_i - \beta (U_i-L_i)^2/8) \right) \mid \A_{n-1} \right] \\
	& = \E\left[\exp\left(\beta ( W_n - \beta (U_n-L_n)^2/8)\right) \prod_{i=1}^{n-1} \exp\left(\beta ( W_i - \beta (U_i-L_i)^2/8)\right) \mid \A_{n-1} \right] 
\end{align*}
Here, the assumed predictability of $L_i$, $U_i$ implies that, for any function $f$, 
$\forall i\leq n-1$, $\E[f(W_i,L_i,U_i) \mid \A_{n-1}] = f(W_i,L_i,U_i)$.
Thus,
\begin{align*}
	\E[\zeta_n(\beta) \mid \A_{n-1}] &= \left(\prod_{i=1}^{n-1} \exp\left(\beta ( W_i - \beta (U_i-L_i)^2/8)\right)\right)\\
	&\qquad\times \E\left[\exp\left(\beta ( W_n - \beta (U_n-L_n)^2/8)\right)  \mid \A_{n-1} \right] \\
	&\leq \prod_{i=1}^{n-1} \exp\left(\beta ( W_i - \beta (U_i-L_i)^2/8)\right)\\
	&= \exp\left(\beta \sum_{i=1}^{n-1} (W_i - \beta (U_i-L_i)^2/8) \right)\\
	&= \zeta_{n-1}(\beta),
\end{align*}
where we used Lemma~\ref{lem:hoeffding} for  the inequality.
\end{proof}

\begin{theorem}[Hoeffding inequality for sums---simplified version of Theorem 2.5 in \cite{vanDeGeer07} with fixed $n$]
\label{thm:hoeffding1}
Assume that $L_i\leq W_i\leq U_i$ for predictable random variables $L_i$ and $U_i$, i.e., they are $\A_{i-1}$-measurable. Also assume that $\E[W_i\mid \A_{i-1}] = 0$. Then, for any $t>0$ and $c>0$, 
$$
	P\left\{ S_n \geq t \ and\ \sum_{i=1}^n (U_i-L_i)^2 \leq c^2 \right\} \leq \exp(-2t^2 /c^2) .
$$
\end{theorem}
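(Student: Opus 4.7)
The plan is to follow the standard Chernoff/Markov strategy applied to the exponential supermartingale $\zeta_n(\beta)$ supplied by Lemma~\ref{lem:supermart}, with the twist that we restrict attention to the event where the quadratic variation is controlled by $c^2$. First, I will invoke Lemma~\ref{lem:supermart} iteratively, using the tower property of conditional expectation, to conclude that $\E \zeta_n(\beta) \leq \E \zeta_0(\beta) = 1$ for every $\beta > 0$ (since $S_0 = 0$ and the empty sum vanishes).

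Next, let $A = \{S_n \geq t \text{ and } \sum_{i=1}^n (U_i - L_i)^2 \leq c^2\}$. The key observation is that, on $A$, since $\beta > 0$,
$$
\zeta_n(\beta) = \exp\!\left(\beta S_n - \frac{\beta^2}{8}\sum_{i=1}^n (U_i-L_i)^2\right) \geq \exp\!\left(\beta t - \frac{\beta^2 c^2}{8}\right).
$$
Equivalently, $\mathbf{1}_A \leq \mathbf{1}_A \, \exp(-\beta t + \beta^2 c^2/8) \, \zeta_n(\beta)$. Taking expectations and dropping the indicator from the right-hand side,
$$
P(A) \leq \exp\!\left(-\beta t + \frac{\beta^2 c^2}{8}\right) \E \zeta_n(\beta) \leq \exp\!\left(-\beta t + \frac{\beta^2 c^2}{8}\right).
$$

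Finally, I will optimize over $\beta > 0$. The quadratic $-\beta t + \beta^2 c^2/8$ is minimized at $\beta^\star = 4t/c^2$, where it evaluates to $-2t^2/c^2$, yielding the claimed bound $P(A) \leq \exp(-2t^2/c^2)$.

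The only genuinely subtle point is that the exponent $\sum_{i=1}^n (U_i - L_i)^2$ inside $\zeta_n(\beta)$ is itself random, so one cannot simply apply Markov's inequality to $e^{\beta S_n}$ with a deterministic variance proxy. The device that handles this is precisely the restriction to $A$: on $A$ the random quadratic variation is bounded by the deterministic $c^2$, which is what allows us to pull out the factor $\exp(-\beta t + \beta^2 c^2/8)$ and reduce the problem to the supermartingale expectation bound. Apart from this, the argument is routine and requires no further use of the predictability hypothesis beyond what is already built into Lemma~\ref{lem:supermart}.
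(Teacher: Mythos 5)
Your proof is correct and follows essentially the same route as the paper: bound $\E\zeta_n(\beta)\leq 1$ via the supermartingale property, lower-bound $\zeta_n(\beta)$ on the event $A$ where the quadratic variation is controlled, and optimize at $\beta=4t/c^2$. The only cosmetic difference is that you anchor the iteration at $\zeta_0(\beta)=1$, while the paper stops at $\E\zeta_1(\beta)\leq 1$ using Lemma~\ref{lem:hoeffding} with $\A_0=\emptyset$; both are valid.
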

\begin{proof}
By Lemma~\ref{lem:supermart}, $\zeta_n(\beta)$ is a supermartingale. Thus, for any $n$, 
$$
	\E \zeta_{n}(\beta) = \E\E[\zeta_{n}(\beta)\mid\A_{n-1}]\leq \E \zeta_{n-1}(\beta) 
$$
and, by induction, 
\begin{align*}
\E \zeta_{n}(\beta)\leq \E \zeta_1(\beta) &= \E \exp(\beta S_1 - \beta^2 (U_1-L_1)^2/8) = \E \exp(\beta W_1 - \beta^2 (U_1-L_1)^2/8) \\
	&= \E [\exp(\beta W_1 - \beta^2 (U_1-L_1)^2/8) \mid \A_{0} ]\qquad (since\ \A_0=\emptyset)\\
	&  \leq 1,
\end{align*}
where we used Lemma~\ref{lem:hoeffding} for the last inequality. 

Now consider the event 
$$
	A = \left\{ S_n > t \ and\ \sum_{i=1}^n (U_i-L_i)^2\leq c^2 \right\}.
$$
Then, since $\zeta_{n}(\beta) \I{A} \leq \zeta_{n}(\beta)$,
$$
	\E [ \zeta_{n}(\beta) \I{A} ] \leq \E [ \zeta_{n}(\beta) ] \leq 1.
$$
But on $A$, we have
$$
	\zeta_{n}(\beta) = \exp\left(\beta S_{n} - \beta^2 \sum_{i=1}^n (U_i-L_i)^2/8\right) \geq\exp(\beta t - \beta^2 c^2/8) ,
$$
so that
\begin{align*}
	\E [ \zeta_{n}(\beta) \I{A} ] \geq \E [ \exp(\beta t - \beta^2 c^2/8) \I{A} ] &=  \exp(\beta t - \beta^2 c^2/8) \E [  \I{A} ] \\
		&= \exp(\beta t - \beta^2 c^2/8) P(A) .
\end{align*}
Thus, 
$$
	P(A) \leq \frac{1}{\exp(\beta t - \beta^2 c^2/8)} = \exp(-\beta t + \beta^2 c^2/8).
$$
Now set $\beta = 4t/c^2$, this yields
$$
	P(A) \leq \exp(-4 t^2/c^2 + 16 t^2  / 8c^2  ) = \exp(-2t^2/c^2 ).
$$
\end{proof}

Now, we are ready to state the proof of Theorem~\ref{thm:boundeddiff}. 

Let us define the random variables $G = g(Z_1,\dots,Z_n)$ and $W_i = {\E[G\mid\A_{i}]} - \E[G \mid\A_{i-1}]$. Then,
\begin{align*}
	g(Z_1,\dots,Z_n) - \E g(Z_1,\dots,Z_n) &= G - \E G \\
	&= \E [G \mid \A_n] - \E[G \mid \A_0]\\ 
	&= \sum_{i=1}^n \E[G \mid \A_{i}] - \E[G \mid \A_{i-1}]\\
	&= \sum_{i=1}^n W_i.
\end{align*}
Therefore, the theorem is just an application of Theorem~\ref{thm:hoeffding1} to $S_n = \sum_{i=1}^n W_i$, after checking that 
\begin{align*}
	\E[W_i \mid\A_{i-1}] &= \E\left[\left.\E[G \mid\A_{i}]\ \right\vert\A_{i-1}\right] - \E\left[\left.\E[G \mid\A_{i-1}]\ \right\vert \A_{i-1}\right] \\
	&= \E[G \mid\A_{i-1}] - \E[G \mid\A_{i-1}] = 0
\end{align*}
(due to the tower property, since $\A_{i-1}\subset\A_i$). Actually, to apply Theorem~\ref{thm:hoeffding1}, we also need predictable ($\A_{i-1}$-measurable) bounds on $W_i$. First note that 
$$
	L_i = \inf_{z} \E\left[ g(Z_1,\dots,Z_{i-1},z,Z_{i+1},\dots,Z_n) \mid \A_{i} \right] 
$$ 
and 
$$
	U_i = \sup_{z} \E\left[ g(Z_1,\dots,Z_{i-1},z,Z_{i+1},\dots,Z_n) \mid \A_{i} \right] 
$$ 
are $\A_{i-1}$-measurable (since $Z_i$ is replaced by an arbitrary $z$ in their definition) and satisfy
$$
	L_i\leq \E[G \mid\A_i]\leq U_i,
$$
while the bounded difference assumption ensures that $U_i-L_i \leq c_i$. So, since $W_i = \E[G \mid\A_{i}] - \E[G \mid\A_{i-1}]$, we have the following predictable bounds:  
$$
	\tilde{L}_i = L_i -\E[G \mid \A_{i-1}] \leq W_i \leq U_i - \E[G \mid\A_i] = \tilde{U}_i
$$
with $\tilde{U}_i-\tilde{L}_i \leq c_i$. Thus, with $c^2 = \sum_{i=1}^n c_i^2$, $\sum_{i=1}^n ( \tilde{U}_i-\tilde{L}_i)^2\leq c^2$ always holds and, by Theorem~\ref{thm:hoeffding1}: for any $t>0$,
$$
	P(S_n>t) = P\left\{S_n > t \ and \ \sum_{i=1}^n (\tilde{U}_i-\tilde{L}_i)^2\leq c^2\right\} \leq \exp(-2t^2 / c^2).
$$
Recalling that $S_n = g(Z_1,\dots,Z_n) - \E g(Z_1,\dots,Z_n)$ and choosing $t=\epsilon$ completes the proof.

\section{Proof of Lemma~\ref{lem:VCsymmetrization}}
\label{sec:proofsymmetrization}

Define the events
$$
	A(f) = \left\{ L_n(f) - \hat{L}_n(f) \geq \epsilon \right\},\qquad 
	A = \left\{\sup_{f\in\F} L_n(f) - \hat{L}_n(f) \geq \epsilon\right\}
$$
$$
	B(f) = \left\{\hat{L}'_n(f) - \hat{L}_n(f)\geq \frac{\epsilon}{2}\right\},\qquad
	 B = \left\{\sup_{f\in\F}  \hat{L}'_n(f) - \hat{L}_n(f) \geq \frac{\epsilon}{2}\right\}.
$$
Let $f^*\in\F$ denote the function that depends solely on $\g Z_n$ and such that\footnote{In case this is not possible, choose $f^*$ as the function for which the difference is $\eta$-close to the supremum and take the limit $\eta\to 0$ to conclude as in \cite{Cortes19}.} 
$$
	L_n(f^*) - \hat{L}_n(f^*) = \sup_{f\in\F} L_n(f) - \hat{L}_n(f) .
$$
Then, since $f^*\in\F$, we have 
$$
	P(B) \geq P\{B(f^*)\}. 
$$
For any real numbers $a$, $b$, $c$, 
$$
(c-a)\geq\epsilon \ \wedge\ (b-c)\geq \frac{-\epsilon}{2}\quad \Rightarrow\quad b-a \geq \frac{\epsilon}{2},
$$
and thus
$$
P(b-a \geq \epsilon/2) \geq P(c-a\geq \epsilon,\ b-c\geq -\epsilon/2) .
$$
Applied with $a=\hat{L}_n(f^*)$, $b=\hat{L}'_n(f^*)$ and $c=L_n(f^*)$, this gives:
\begin{align*}
	P\{B(f^*)\} &\geq P\left\{L_n(f^*) - \hat{L}_n(f^*) \geq \epsilon,\ \hat{L}'_n(f^*) - L_n(f^*)\geq \frac{-\epsilon}{2} \right\}\\
		&= P\left\{A(f^*) ,\ \hat{L}'_n(f^*) - L_n(f^*) \geq \frac{-\epsilon}{2} \right\}\\
		&=\E  \I{A(f^*)} \I{\hat{L}'_n(f^*) - L_n(f^*)  \geq \frac{-\epsilon}{2}} \\
		&= \E \E \left[ \left. \I{A(f^*)} \I{\hat{L}'_n(f^*) - L_n(f^*)  \geq \frac{-\epsilon}{2}} \right\vert \g Z_n \right]\\
		&= \E  \I{A(f^*)}  \E \left[ \left.\I{\hat{L}'_n(f^*) - L_n(f^*) \geq \frac{-\epsilon}{2}} \right\vert \g Z_n \right]\\
		&= \E \I{A(f^*)}  P \left\{ \left.\hat{L}'_n(f^*) - L_n(f^*)  \geq \frac{-\epsilon}{2} \right\vert \g Z_n \right\}.
\end{align*}
The probability in the latter can be bounded as
\begin{align*}
	P \left\{ \left.\hat{L}'_n(f^*) - L_n(f^*) \geq \frac{-\epsilon}{2} \right\vert \g Z_n \right\}
		&= 	P \left\{ \left.\left\vert\hat{L}'_n(f^*) - L_n(f^*) \right\vert \leq \frac{\epsilon}{2} \right\vert \g Z_n \right\} \\
			&\quad + P \left\{ \left.\hat{L}'_n(f^*) - L_n(f^*) > \frac{\epsilon}{2} \right\vert \g Z_n \right\}\\
		&\geq P \left\{ \left.\left\vert\hat{L}'_n(f^*) - L_n(f^*)\right\vert \leq \frac{\epsilon}{2} \right\vert \g Z_n \right\} \\
		&\geq 1 - P \left\{ \left.\left\vert\hat{L}'_n(f^*) - L_n(f^*)\right\vert > \frac{\epsilon}{2} \right\vert \g Z_n \right\} \\
		&\geq \frac{1}{2},
\end{align*}
where the last inequality is obtained from (a conditional form of) Bienaymé--Chebyshev's inequality applied to the random variable $\hat{L}'_n(f^*)$ whose expectation is $L_n(f^*)$ by construction of the ghost sample ($Z_i'\sim Z_i$):
$$
	\E \hat{L}'_n(f^*) = \frac{1}{n}\sum_{i=1}^n \E \ell(f^*(X_i'),Y_i') = \frac{1}{n}\sum_{i=1}^n \E \ell(f^*(X_i),Y_i) = L_n(f^*).
$$
Note that we can apply this inequality here since given $\g Z_n$, $f^*$ is fixed and independent of $\g Z_n'$. Moreover, by the independence of $\g Z_n$ and $\g Z_n'$:
\begin{align*}
	\E\left[ (\hat{L}'_n(f^*) - L_n(f^*))^2 \mid \g Z_n \right] &\leq \sup_{f\in\F} \E\left[ (\hat{L}'_n(f) - L_n(f))^2 \right] \\
	&= \sup_{f\in\F} Var\left[ \hat{L}'_n(f) \right] \\
	&=\frac{1}{n^2}\sup_{f\in\F} \sum_{i=1}^n Var\left[ \ell(f(X'_i), Y'_i) \right] \\
	&\leq \frac{ B^2}{4n},
\end{align*}
where the two last lines are due to the independence of the $Z_i'$ and the fact that, for any $f$, $\ell(f(X'_i), Y'_i)\in[0,B]$.
 Thus,
\begin{align*}
	P \left\{ \left.\left\vert\hat{L}'_n(f^*) - L_n(f^*) \right\vert \geq \frac{\epsilon}{2} \right\vert \g Z_n \right\} 
	&\leq \frac{4 \E\left[ (\hat{L}'_n(f^*) - L_n(f^*))^2 \mid \g Z_n \right]}{\epsilon^2}\\
	&\leq \frac{ B^2}{n\epsilon^2} \leq \frac{1}{2},
\end{align*}
since, by assumption, $n\epsilon^2 > 2B^2$. 

Therefore, we have shown that 
$$
	P(B) \geq P\{B(f^*)\} \geq \frac{1}{2} \E \I{A(f^*)} = \frac{1}{2} P\{A(f^*)\} = \frac{1}{2}P(A)
$$
and this concludes the proof.

\section{Proof of Theorem~\ref{thm:VCbound}}
\label{sec:proofVCbound}

We will prove that 
\begin{equation}\label{eq:vcboundproof1}
P\left\{ \sup_{f\in\F} L_n(f) - \hat{L}_n(f) \geq \epsilon \right\} \leq 2\Pi_{\F}(2n) \exp(-n\epsilon^2 / 8),
\end{equation}
which implies the first inequality of Theorem~\ref{thm:VCbound} if we let $\delta= 2\Pi_{\F}(2n) \exp(-n\epsilon^2 / 8)$ and solve this equation for $\epsilon$. The second inequality is then obtained by bounding the growth function by the VC-dimension with Sauer's lemma \citep{Sauer72,Vapnik98}, as usual.

By Lemma~\ref{lem:VCsymmetrization}, we only have to bound the deviation between the empirical risk on the training sample and the one on the ghost sample. 
Let $\g\sigma_n=(\sigma_i)_{1\leq i\leq n}$ denote a sequence of independent Rademacher variables uniformly distributed in $\{+1,-1\}$. Then, this random quantity, 
$$
	D = \hat{L}_n'(f) - \hat{L}_n(f) = \frac{1}{n} \sum_{i=1}^n \I{f(X_i')\neq Y_i'} - \I{f(X_i)\neq Y_i} ,
$$
is identically distributed to
$$
	D = \frac{1}{n} \sum_{i=1}^n \sigma_i(\I{f(X_i')\neq Y_i'} - \I{f(X_i)\neq Y_i} ),
$$
since $(X_i,Y_i)$ and $(X_i',Y_i')$ are independent and identically distributed, which makes $(\I{f(X_i')\neq Y_i'}- \I{f(X_i)\neq Y_i} )$ a symmetric random variable whose distribution remains unchanged after multiplication by a random sign $\sigma_i$. Thus,
\begin{align*}
	P&\left\{ \sup_{f\in\F} \hat{L}_n'(f) - \hat{L}_n(f) > \frac{\epsilon}{2}\right\} \\
	&= \E \I{}\left\{ \sup_{f\in\F} \frac{1}{n} \sum_{i=1}^n \sigma_i(\I{f(X_i')\neq Y_i'} - \I{f(X_i)\neq Y_i} ) > \frac{\epsilon}{2} \right\} \\
	&= \E \ \E\left[\left.  \I{}\left\{ \sup_{f\in\F} \frac{1}{n} \sum_{i=1}^n \sigma_i(\I{f(X_i')\neq Y_i'} - \I{f(X_i)\neq Y_i} ) > \frac{\epsilon}{2} \right\} \right\vert \g Z_n,\g Z_n'\right]\\
	&= \E\  P \left\{\left.\g\sigma_n :   \sup_{f\in\F} \frac{1}{n} \sum_{i=1}^n \sigma_i(\I{f(X_i')\neq Y_i'} - \I{f(X_i)\neq Y_i} ) > \frac{\epsilon}{2}  \right\vert \g Z_n,\g Z_n'\right\} \\
	&= \E\  P \left\{\left.\g D_n : \sup_{f\in\F} \frac{1}{n} \sum_{i=1}^n D_i > \frac{\epsilon}{2}  \right\vert \g Z_n,\g Z_n'\right\} ,
\end{align*}
where we let $\g D_n=(D_i)_{1\leq i\leq n}$ and $D_i = \sigma_i(\I{f(X_i')\neq Y_i'} - \I{f(X_i)\neq Y_i} )$. For any fixed $f$ and $\g Z_n$, $\g Z_n'$, the conditional expectation of these variables is zero: $\E [ D_i \mid \g Z_n,\g Z_n']=  (\I{f(X_i')\neq Y_i'} - \I{f(X_i)\neq Y_i} )\E \sigma_i = 0$. Thus, by applying Hoeffding inequality (Theorem~\ref{thm:hoeffding}) on the sequence of independent variables $\g D_n$, we obtain
$$
	P \left\{\left.\frac{1}{n} \sum_{i=1}^n D_i > \frac{\epsilon}{2} \right\vert \g Z_n,\g Z_n'\right\}
	 \leq \exp\left( \frac{- n^2 \epsilon^2}{2\sum_{i=1}^n c_i^2} \right),
$$ 
where $c_i= b_i-a_i$ with $a_i\leq D_i \leq b_i$ and $a_i=-1$, $b_i=1$. This gives $c_i^2 = 4$ and 
$$
	\forall f\in\F,\quad P \left\{\left.\frac{1}{n} \sum_{i=1}^n D_i > \frac{\epsilon}{2} \right\vert \g Z_n,\g Z_n'\right\} 
	\leq \exp\left( \frac{-n\epsilon^2}{8} \right).
$$
Therefore, with $\F_{\g X_n\g X_n'} = \{(f(X_1),\dots,f(X_n), f(X_1'),\dots, f(X_n')) : f \in \F\}$, by the union bound, 
\begin{align*}
	P \left\{\left.\g D_n : \sup_{f\in\F} \frac{1}{n} \sum_{i=1}^n D_i > \frac{\epsilon}{2}  \right\vert \g Z_n,\g Z_n'\right\}
	&\leq \vert\F_{\g X_n\g X_n'}\vert \exp\left( \frac{-n\epsilon^2}{2} \right)
\end{align*}
and the conclusion stems from Lemma~\ref{lem:VCsymmetrization} and Definition~\ref{def:growthfunction}.

\section{Proof of Lemma~\ref{lem:relativesymmetrization}}
\label{sec:proofrelativesymmetrization}

The proof follows the lines of that of Lemma~2 in \cite{Cortes19} and makes use of the following. 
\begin{lemma}[Theorem 1 in \cite{Greenberg14}]
\label{lem:binomial}
For any positive integer $m$ and any probability $p$ such that $p > 1/m$, let $X$ be a random variable distributed according to the binomial distribution with $m$ trials and probability of success of each trial $p$. Then, $\E X = mp$ and 
$$
	P(X \geq \E X) > \frac{1}{4}.
$$
\end{lemma}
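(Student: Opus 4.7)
The plan splits into two parts: computing the mean and bounding the tail probability $P(X \geq \E X)$.

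The mean computation is immediate. Writing $X = \sum_{i=1}^m X_i$ as a sum of $m$ independent $\mathrm{Bernoulli}(p)$ variables and applying linearity of expectation gives $\E X = mp$.

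For the bound $P(X \geq \E X) > 1/4$, my plan is to exploit the classical closeness between the mean and the median of a binomial distribution. Specifically, I would invoke the Kaas--Buhrman theorem, which guarantees that every median of $\mathrm{Bin}(m,p)$ lies in the set $\{\lfloor mp \rfloor, \lceil mp \rceil\}$ and coincides with $mp$ whenever $mp$ is an integer. I would then proceed by case analysis. If $mp \in \mathbb{Z}$, the mean is itself a median, so $P(X \geq mp) \geq 1/2 > 1/4$ directly. If $mp \notin \mathbb{Z}$ and some median equals $\lceil mp \rceil$, then since $X$ is integer-valued the events $\{X \geq mp\}$ and $\{X \geq \lceil mp \rceil\}$ coincide, and $P(X \geq mp) \geq 1/2$ again follows from the defining property of the median.

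The genuinely delicate case is when $mp \notin \mathbb{Z}$ and the unique median is $\lfloor mp \rfloor$. Here only $P(X \geq \lceil mp \rceil) \leq 1/2$ is immediate, and one must rule out the possibility that the upper tail mass drops below $1/4$. The plan is to analyze the pmf ratio $P(X = k+1)/P(X = k) = (m-k)p / ((k+1)(1-p))$, which is monotonically decreasing in $k$ and crosses $1$ at the mode near $mp$. Combined with the assumption $p > 1/m$, which ensures $\E X > 1$ and prevents the mass from piling up at $0$, a termwise comparison between pmf values in the lower tail and their counterparts in the upper tail produces a bound of the form $P(X \leq \lfloor mp \rfloor) < 3\, P(X \geq \lceil mp \rceil)$. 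Combined with $P(X \leq \lfloor mp \rfloor) + P(X \geq \lceil mp \rceil) = 1$, this yields $P(X \geq mp) > 1/4$ as required.

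The main obstacle will be carrying out this termwise comparison with the right constant. Since the bound $1/4$ is tight (as is the point of the Greenberg--Mohri result), no slack is available in the estimates, and the pairing of pmf terms in the lower and upper tails must be chosen carefully depending on where the mode sits relative to $mp$. This is where the bulk of the technical effort lies; once the correct pairing is identified, each individual ratio bound follows from elementary manipulations of binomial coefficients and the constraint $p > 1/m$.
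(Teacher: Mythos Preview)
The paper does not give its own proof of this lemma. It is stated verbatim as ``Theorem~1 in \cite{Greenberg14}'' and invoked as a black box in Appendix~\ref{sec:proofrelativesymmetrization}; no argument is reproduced. So there is nothing in the paper to compare your proposal against.

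As for the proposal itself: the easy cases (where $mp$ is an integer, or where $\lceil mp\rceil$ is a median) are handled correctly via the Kaas--Buhrman median result and give $P(X\geq mp)\geq 1/2$. The remaining case, where the unique median is $\lfloor mp\rfloor$, is exactly where the content lies, and here you only outline a strategy---a termwise pairing of lower-tail and upper-tail pmf values to obtain $P(X\leq\lfloor mp\rfloor)<3\,P(X\geq\lceil mp\rceil)$---while explicitly flagging that carrying this out with the right constant is the main obstacle. That is an honest assessment: the bound $1/4$ is sharp (take $m=2$ and $p\downarrow 1/2$, so $P(X\geq mp)=p^2\downarrow 1/4$), so the pairing must be done with care and the sketch as written does not yet constitute a proof. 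If you want to complete it, the original argument in Greenberg--Mohri proceeds differently from a direct pmf pairing, so you may find it more efficient to consult that paper than to push the comparison route through.
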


For any $f\in\F$, consider the events
$$
	A(f) = \left\{\frac{L_n(f)- \hat{L}_n(f)}{\sqrt{L_n(f)} } > \epsilon \right\} \quad \text{and}\quad B(f)= \{\hat{L}_n'(f) > L_n(f)\}.
$$
Then
$$
	A(f) \cap B(f) \quad \subset \quad C(f) =  \left\{\frac{\hat{L}_n'(f) - \hat{L}_n(f)}{\sqrt{(\hat{L}_n(f)+\hat{L}_n'(f) + \frac{1}{n})/2}} > \epsilon\right\}.
$$
To see this, first note that 
$$
	A(f) \quad \Rightarrow \quad \hat{L}_n(f) < L_n(f) -\epsilon\sqrt{L_n(f)},
$$
which also implies $\epsilon <\sqrt{L_n(f)}$. 
Then, also note that, for all $a, b>0$, the function $\frac{a - b}{\sqrt{(a+b + \frac{1}{n})/2}}$ is increasing in $a$ and decreasing in $b$ (the derivatives are positive and negative). Thus, 
$$
	A(f) \Rightarrow \quad \frac{\hat{L}_n'(f) - \hat{L}_n(f)}{\sqrt{(\hat{L}_n(f)+\hat{L}_n'(f) + \frac{1}{n})/2}} \geq \frac{\hat{L}_n'(f) - L_n(f) + \epsilon\sqrt{L_n(f)}}{\sqrt{(\hat{L}_n'(f) +L_n(f) - \epsilon\sqrt{L_n(f)}+ \frac{1}{n})/2}}, 
$$
where, on $B(f)$,
\begin{align*}
	 \frac{\hat{L}_n'(f) - L_n(f) + \epsilon\sqrt{L_n(f)}}{\sqrt{(\hat{L}_n'(f) +L_n(f) - \epsilon\sqrt{L_n(f)}+ \frac{1}{n})/2}} 
	 & \geq  \frac{\epsilon\sqrt{L_n(f)}}{\sqrt{(2L_n(f)  - \epsilon\sqrt{L_n(f)} + \frac{1}{n})/2}} 	\\
	  & \geq  \frac{\epsilon\sqrt{L_n(f)}}{\sqrt{(2L_n(f)  - \epsilon^2 + \frac{1}{n})/2}}
\end{align*}
(because $\epsilon <\sqrt{L_n(f)}$).
Since we assume $n\epsilon^2 > 1$ and thus $\epsilon^2 > 1/n$, we have $ - \epsilon^2 + \frac{1}{n} < 0$ and 
\begin{align*}
	 \frac{\hat{L}_n'(f) - L_n(f) + \epsilon\sqrt{L_n(f)}}{\sqrt{(\hat{L}_n'(f) +L_n(f) - \epsilon\sqrt{L_n(f)}+ \frac{1}{n})/2}} & \geq  \frac{\epsilon\sqrt{L_n(f)}}{\sqrt{L_n(f)}} = \epsilon .
\end{align*}

Now, let $f^*\in\F$ denote the function that depends solely on $\g Z_n$ and such that\footnote{In case this is not possible, choose $f^*$ as the function for which the ratio is $\eta$-close to the supremum and take the limit $\eta\to 0$ to conclude as in \cite{Cortes19}.}
$$
	  \frac{L_n(f^*) - \hat{L}_n(f^*)}{\sqrt{L_n(f)}} = \sup_{f\in\F} \frac{L_n(f) - \hat{L_n}_n(f)}{\sqrt{L_n(f)}} .
$$
Then, by the discussion above,
\begin{align*}
	P\{C(f^*)\}&\geq P\{A(f^*) \cap B(f^*)\} \\
	&=\E \I{A(f^*)}\I{B(f^*)}\\
	&= \E \E\left[\left. \I{A(f^*)}\I{B(f^*)} \right\vert\g Z_n\right]\\
	&= \E \I{A(f^*)} \E\left[\left. \I{B(f^*)} \right\vert \g Z_n\right]\\
	&= \E \I{A(f^*)} P\{B(f^*) \mid \g Z_n\}.
\end{align*}
Moreover, given $\g Z_n$, $f^*$ is fixed and if $L_n(f^*) >  \epsilon^2$, then under the assumption $n\epsilon >1$, $L_n(f) > 1/n$. Then, by stationarity and the construction of the ghost sample, the $Z_i'$ are i.i.d., which makes the quantity 
$$
	n\hat{L}_n'(f^*) = \sum_{i=1}^n \I{f^*(X_i')\neq Y_i'}
$$
distributed according to a binomial distribution with $m=n$ trials and probability of success $p=P(f^*(X_i')\neq Y_i') = P(f^*(X_i)\neq Y_i) = L_n(f^*)>1/m$. Thus, by Lemma~\ref{lem:binomial}, $P\{B(f^*) \mid \g Z_n\} > 1/4$ and, in the case $L_n(f^*) >  \epsilon^2$, $P\{C(f^*)\} \geq \E \I{A(f^*)}/4$ and $P\{A(f^*)\} \leq 4 P\{C(f^*)\}$. In the other case, if $L_n(f^*)\leq  \epsilon^2$, then $A(f^*)$ cannot hold and $P\{A(f^*)\}=0 \leq 4 P\{C(f^*)\}$. So, in any case:
\begin{align*}
	P \left\{ \sup_{f\in\F}\frac{L_n(f) - \hat{L}_n(f)}{\sqrt{L_n(f)}}  \geq \epsilon \right\}  &= P\{A(f^*)\}\\ &\leq 4 P\{C(f^*)\} \\
	&\leq 4 P \left\{ \sup_{f\in\F}\frac{\hat{L}_n'(f) - \hat{L}_n(f)}{\sqrt{(\hat{L}_n(f)+\hat{L}_n'(f) + \frac{1}{n})/2}} > \epsilon \right\},
\end{align*}
where the first equality is due to the choice of $f^*$ and the last inequality is due to $f^*\in\F$.

\section{Proof of Theorem~\ref{thm:generalVCbound}}
\label{sec:proofgeneralVCbound}

We proceed as in the proof of Theorem~\ref{thm:VCbound} (see App.~\ref{sec:proofVCbound}) with  a sequence $\g\sigma_n=(\sigma_i)_{1\leq i\leq n}$  of independent Rademacher variables uniformly distributed in $\{+1,-1\}$ and by taking advantage of the symmetry of the $(\I{f(X_i')\neq Y_i'}- \I{f(X_i)\neq Y_i} )$:
\begin{align*}
	P&\left\{ \sup_{f\in\F} \frac{\hat{L}_n'(f) - \hat{L}_n(f)}{\sqrt{(\hat{L}_n(f)+\hat{L}_n'(f) + \frac{1}{n})/2}} > \epsilon\right\} \\
	&= \E \I{}\left\{ \sup_{f\in\F} \frac{ \frac{1}{n} \sum_{i=1}^n \sigma_i(\I{f(X_i')\neq Y_i'} - \I{f(X_i)\neq Y_i} )}{\sqrt{(\hat{L}_n(f)+\hat{L}_n'(f) + \frac{1}{n})/2}} > \epsilon\right\} \\
	&= \E \ \E\left[\left.  \I{}\left\{ \sup_{f\in\F} \frac{ \frac{1}{n} \sum_{i=1}^n \sigma_i(\I{f(X_i')\neq Y_i'} - \I{f(X_i)\neq Y_i} )}{\sqrt{(\hat{L}_n(f)+\hat{L}_n'(f) + \frac{1}{n})/2}} > \epsilon\right\} \right\vert \g Z_n,\g Z_n'\right]\\
	&= \E\  P \left\{\left.\g\sigma_n : \sup_{f\in\F} \frac{ \frac{1}{n} \sum_{i=1}^n \sigma_i(\I{f(X_i')\neq Y_i'} - \I{f(X_i)\neq Y_i} )}{\sqrt{(\hat{L}_n(f)+\hat{L}_n'(f) + \frac{1}{n})/2}} > \epsilon \right\vert \g Z_n,\g Z_n'\right\} 	\\
	&= \E\  P \left\{\left.\g D_n : \sup_{f\in\F} \frac{1}{n} \sum_{i=1}^n D_i > \epsilon \right\vert \g Z_n,\g Z_n'\right\} ,
\end{align*}
where we let $\g D_n=(D_i)_{1\leq i\leq n}$ and $D_i = \frac{\sigma_i(\I{f(X_i')\neq Y_i'} - \I{f(X_i)\neq Y_i} )}{\sqrt{(\hat{L}_n(f)+\hat{L}_n'(f) + \frac{1}{n})/2}}$. Since $\E \sigma_i = 0$, for any fixed $f$ and $\g Z_n$, $\g Z_n'$, we have $\E [ D_i \mid \g Z_n,\g Z_n']= 0$. Thus, by applying Hoeffding inequality (Theorem~\ref{thm:hoeffding}) on the sequence of independent variables $\g D_n$, we obtain
$$
	P \left\{\left.\frac{1}{n} \sum_{i=1}^n D_i > \epsilon \right\vert \g Z_n,\g Z_n'\right\}
	 \leq \exp\left( \frac{- 2n^2 \epsilon^2}{\sum_{i=1}^n c_i^2} \right),
$$ 
where $c_i= b_i-a_i$ with $a_i\leq D_i \leq b_i$. In particular, we have 
\begin{align*}
	a_i=\frac{-\vert\I{f(X_i')\neq Y_i'} - \I{f(X_i)\neq Y_i}\vert }{\sqrt{(\hat{L}_n(f)+\hat{L}_n'(f) + \frac{1}{n})/2}} &= \frac{-\vert\I{f(X_i')\neq Y_i'} - \I{f(X_i)\neq Y_i}\vert}{\sqrt{ \frac{1}{n}(1+\sum_{i=1}^n \I{f(X_i)\neq Y_i} + \I{f(X_i')\neq Y_i'} )/2}} \\
	&=  \frac{-\vert\I{f(X_i')\neq Y_i'} - \I{f(X_i)\neq Y_i}\vert\sqrt{2n}}{\sqrt{1+\sum_{i=1}^n \I{f(X_i)\neq Y_i} + \I{f(X_i')\neq Y_i'} }}
\end{align*}
and, similarly:
$$
	b_i = \frac{\vert\I{f(X_i')\neq Y_i'} - \I{f(X_i)\neq Y_i}\vert\sqrt{2n}}{\sqrt{1+\sum_{i=1}^n \I{f(X_i)\neq Y_i} + \I{f(X_i')\neq Y_i'} }},
$$
which gives
\begin{align*}
	c_i^2 &=  \left(\frac{2\vert\I{f(X_i')\neq Y_i'} - \I{f(X_i)\neq Y_i}\vert\sqrt{2n}}{\sqrt{1+\sum_{i=1}^n \I{f(X_i)\neq Y_i} + \I{f(X_i')\neq Y_i'} }}\right)^2
		= \frac{8n(\I{f(X_i')\neq Y_i'} - \I{f(X_i)\neq Y_i})^2}{1+\sum_{i=1}^n \I{f(X_i)\neq Y_i} + \I{f(X_i')\neq Y_i'} }\\
		&\leq \frac{8n(\I{f(X_i')\neq Y_i'} - \I{f(X_i)\neq Y_i})^2}{\sum_{i=1}^n \I{f(X_i)\neq Y_i} + \I{f(X_i')\neq Y_i'} }
\end{align*}
and 
\begin{align*}
	\sum_{i=1}^n c_i^2 & \leq \frac{8n\sum_{i=1}^n (\I{f(X_i')\neq Y_i'} - \I{f(X_i)\neq Y_i})^2}{\sum_{i=1}^n \I{f(X_i)\neq Y_i} + \I{f(X_i')\neq Y_i'} } \leq 8n,
\end{align*}
where the last inequality is due to 
\begin{align*}
 (\I{f(X_i')\neq Y_i'} - \I{f(X_i)\neq Y_i})^2 &= \I{f(X_i')\neq Y_i'}^2 + \I{f(X_i)\neq Y_i}^2 - 2 \I{f(X_i')\neq Y_i'} \I{f(X_i)\neq Y_i} \\
 	&\leq  \I{f(X_i')\neq Y_i'} + \I{f(X_i)\neq Y_i}.
\end{align*}
Thus, for any $f\in\F$, 
$$
	P \left\{\left.\g\sigma_n :  \frac{ \frac{1}{n} \sum_{i=1}^n \sigma_i(\I{f(X_i')\neq Y_i'} - \I{f(X_i)\neq Y_i} )}{\sqrt{(\hat{L}_n(f)+\hat{L}_n'(f) + \frac{1}{n})/2}} > \epsilon \right\vert \g Z_n,\g Z_n'\right\}
	\leq \exp\left( \frac{-n\epsilon^2}{4} \right)
$$
and, with $\F_{\g X_n\g X_n'} = \{(f(X_1),\dots,f(X_n), f(X_1'),\dots, f(X_n')) : f \in \F\}$, by the union bound, 
\begin{align*}
	 P & \left\{\left.\g\sigma_n : \sup_{f\in\F} \frac{ \frac{1}{n} \sum_{i=1}^n \sigma_i(\I{f(X_i')\neq Y_i'} - \I{f(X_i)\neq Y_i} )}{\sqrt{(\hat{L}_n(f)+\hat{L}_n'(f) + \frac{1}{n})/2}} > \epsilon \right\vert \g Z_n,\g Z_n'\right\}   \\
	&\qquad\qquad\qquad\qquad\qquad\qquad\qquad\qquad\qquad\leq \vert\F_{\g X_n\g X_n'}\vert \exp\left( \frac{-n\epsilon^2}{4} \right).
\end{align*}

Gathering it all and using Lemma~\ref{lem:relativesymmetrization}:
\begin{align*}
	P \left\{ \sup_{f\in\F} \frac{L(f) - \hat{L}_n(f)}{\sqrt{L(f)}} > \epsilon \right\} 
	&\leq 4 P\left\{ \sup_{f\in\F} \frac{\hat{L}_n'(f) - \hat{L}_n(f)}{\sqrt{(\hat{L}_n(f)+\hat{L}_n'(f) + \frac{1}{n})/2}} > \epsilon\right\} \\
	&\leq 4 \E  \vert\F_{\g X_n\g X_n'}\vert \exp\left( \frac{-n\epsilon^2}{4} \right)\\
	&\leq 4 \Pi_{\F}(2n) \exp\left( \frac{-n\epsilon^2}{4} \right).
\end{align*}
By setting $\delta =  4\Pi_{\F}(2n)  \exp(-n\epsilon^2/4)$, we obtain that, with probability at least $1-\delta$,
$$
	\forall f\in\F,\quad \frac{L(f) - \hat{L}_n(f)}{\sqrt{L(f)}} \leq 2\sqrt{\frac{\log \Pi_{\F}(2n) + \log\frac{4}{\delta}}{n}}
$$
and, using $A\leq B\sqrt{A} + C \Rightarrow A\leq B^2 + B\sqrt{C} + C$ yields
\begin{align*}
	& L(f) \leq \hat{L}_n(f) + \sqrt{L(f)} 2\sqrt{\frac{\log\Pi_{\F}(2n) + \log\frac{4}{\delta}}{n}}\\
	\Rightarrow\quad & L(f) \leq \hat{L}_n(f) + 2\sqrt{ \hat{L}_n(f)\frac{\log \Pi_{\F}(2n) + \log\frac{4}{\delta}}{n}} + 4\frac{\log \Pi_{\F}(2n) + \log\frac{4}{\delta}}{n},
\end{align*}
in which the growth function $\Pi_{\F}(2n)$ can be bounded in terms of the VC-dimension $d_{VC}$ by Sauer's lemma \citep{Sauer72,Vapnik98}.

\section{Proof of Theorem~\ref{thm:radbound}}
\label{sec:proofradsymmetrization}

We will show that, with probability at least $1-\delta$,
$$
	\sup_{f\in\F} \left( L_n(f) - \hat{L}_n(f) \right)
	\leq \mathcal{R}_{\g Z_n}(\L) +\mathcal{R}_{\g Z'_n}(\L)+ B\sqrt{\frac{ \log \frac{1}{\delta}}{2n}}.
$$

Let us first rewrite, with a slight abuse of notation, the loss as $\ell(Z_i) = \ell(f(X_i),Y_i)$. 
Then, we  apply a bounded difference inequality to 
$$
	g(Z_1,\dots,Z_n) = \sup_{f\in\F} \left( L_n(f) - \hat{L}_n(f)\right).
$$
If $\ell(Z_i)\in[0,B]$, then the bounded difference condition with $c_i=B/n$ is satisfied and we can apply Theorem~\ref{thm:boundeddiff}. By setting $\delta = \exp(-2t^2/\sum_{i=1}^n c_i^2) = \exp(-2n t^2/B^2)$, and thus $t=B\sqrt{\log (1/\delta) / 2n}$, we get  that, with probability at least $1-\delta$,
\begin{equation}\label{eq:radsymproof1}
	\sup_{f\in\F} \left( L_n(f) - \hat{L}_n(f) \right)
	\leq \E \sup_{f\in\F} \left( L_n(f) - \hat{L}_n(f) \right) + B\sqrt{\frac{ \log \frac{1}{\delta}}{2n}} .
\end{equation}
Then, introduce the ghost sample $\g Z_n'$ and use its independence with $\g Z_n$ and $Z_i'\sim Z_i$ to write
$$
	\E  \left[\left. \frac{1}{n} \sum_{i=1}^n \ell(Z'_i) \right\vert \g Z_n \right]=\E  \frac{1}{n} \sum_{i=1}^n \ell(Z'_i) = \frac{1}{n} \sum_{i=1}^n\E \ell(Z_i) = L_n(f),
$$
and thus
\begin{align*}
	\E \sup_{f\in\F} \left( L_n(f) - \hat{L}_n(f)  \right) 
		&=\E \sup_{f\in\F} \left( \E  \left[\left. \frac{1}{n} \sum_{i=1}^n \ell(Z'_i) \right\vert \g Z_n \right]  - \frac{1}{n} \sum_{i=1}^n \ell(Z_i) \right) \\
		&=\E \sup_{f\in\F} \left( \E  \left[\left. \frac{1}{n} \sum_{i=1}^n \ell(Z'_i) - \frac{1}{n} \sum_{i=1}^n \ell(Z_i) \right\vert \g Z_n \right]   \right) \\
		&\leq \E \E  \left[\left. \sup_{f\in\F}  \frac{1}{n} \sum_{i=1}^n \left(\ell(Z'_i)- \ell(Z_i)\right)  \right\vert \g Z_n \right] \\
		&= \E   \sup_{f\in\F}  \frac{1}{n} \sum_{i=1}^n \left(\ell(Z'_i) - \ell(Z_i)\right) ,
\end{align*}
where we used Jensen's inequality in the third line. 

Then, we can introduce Rademacher variables as follows. As in the proof of Theorem~\ref{thm:VCbound}, the variables $\left(\ell(Z'_i) - \ell(Z_i)\right)$ are symmetric by construction of the ghost sample that ensures that $Z_i \sim Z_i'$ and $Z_i$ is independent of $Z_i'$. Thus, for any $\g \sigma_n=(\sigma_i)_{1\leq i\leq n}\in\{-1,+1\}^n$ chosen independently of $Z_i$ and $Z_i'$, we can replace them by $\sigma_i\left(\ell(Z'_i) - \ell(Z_i)\right)$ without changing the resulting expectation. Furthermore, averaging over all the $2^n$ sequences of $\sigma_i$ does not change the result either. This leads to 
\begin{align*}
	 \E   \sup_{f\in\F}  \frac{1}{n} \sum_{i=1}^n \left(\ell(Z'_i) - \ell(Z_i)\right) 
	 &=  \E \sup_{f\in\F}  \frac{1}{n} \sum_{i=1}^n \sigma_i\left(\ell(Z'_i) - \ell(Z_i)\right) \\
	 &\leq  \E  \sup_{f\in\F}  \frac{1}{n} \sum_{i=1}^n \sigma_i \ell(Z'_i) +\E   \sup_{f\in\F}  \frac{1}{n} \sum_{i=1}^n -\sigma_i \ell(Z_i)\\
	 &= \E   \sup_{f\in\F}  \frac{1}{n} \sum_{i=1}^n \sigma_i \ell(Z'_i) +\E  \sup_{f\in\F}  \frac{1}{n} \sum_{i=1}^n \sigma_i \ell(Z_i) \\
	 &= \mathcal{R}_n(\L) +\mathcal{R}'_n(\L),
\end{align*}
where we used $\sigma_i\sim -\sigma_i$ in the third line. Together with~\eqref{eq:radsymproof1}, this completes the proof.

\section{Proof of~\eqref{eq:radreglin}}
\label{sec:radreglinyield}

For any $\g z_n\in\Z^n$, by the contraction principle \citep{Ledoux91}, we have $\hat{\mathcal{R}}_{\g z_n}(\L) \leq 4M\hat{\mathcal{R}}_{\g x_n}(\F)$. Then, the computations of \cite{Bartlett02} give 
$$
	\hat{\mathcal{R}}_{\g x_n}(\F)\leq \frac{\Lambda \sqrt{\sum_{i=1}^n \|X_i\|^2}}{n} \leq \frac{ \Lambda \sup_{x\in\X} \|x\|}{\sqrt{n}}
$$
for linear models. For kernel models, the norms $\|X_i\|$ are replaced by $\|K(X_i,\cdot)\| =  \sqrt{K(X_i,X_i)}$, which is $1$ for the Gaussian kernel.


\begin{thebibliography}{}

\bibitem[Alamo et~al., 2009]{Alamo09}
Alamo, T., Tempo, R., and Camacho, E. (2009).
\newblock Randomized strategies for probabilistic solutions of uncertain
  feasibility and optimization problems.
\newblock {\em IEEE Transactions on Automatic Control}, 54(11):2545--2559.

\bibitem[Alon et~al., 1997]{Alon97}
Alon, N., Ben-David, S., Cesa-Bianchi, N., and Haussler, D. (1997).
\newblock Scale-sensitive dimensions, uniform convergence, and learnability.
\newblock {\em Journal of the ACM}, 44(4):615--631.

\bibitem[Baldi and Brunak, 2001]{Baldi01}
Baldi, P. and Brunak, S. (2001).
\newblock {\em Bioinformatics: the Machine Learning Approach}.
\newblock MIT press.

\bibitem[Bartlett et~al., 2017]{Bartlett17}
Bartlett, P., Foster, D., and Telgarsky, M. (2017).
\newblock Spectrally-normalized margin bounds for neural networks.
\newblock {\em Advances in Neural Information Processing Systems}, 30.

\bibitem[Bartlett et~al., 1998]{Bartlett98b}
Bartlett, P., Linder, T., and Lugosi, G. (1998).
\newblock The minimax distortion redundancy in empirical quantizer design.
\newblock {\em IEEE Transactions on Information Theory}, 44(5):1802--1813.

\bibitem[Bartlett and Mendelson, 2002]{Bartlett02}
Bartlett, P. and Mendelson, S. (2002).
\newblock Rademacher and {G}aussian complexities: Risk bounds and structural
  results.
\newblock {\em Journal of Machine Learning Research}, 3:463--482.

\bibitem[Biau et~al., 2008]{Biau08}
Biau, G., Devroye, L., and Lugosi, G. (2008).
\newblock On the performance of clustering in {Hilbert} spaces.
\newblock {\em IEEE Transactions on Information Theory}, 54(2):781--790.

\bibitem[Bradley, 2005]{Bradley05}
Bradley, R. (2005).
\newblock Basic properties of strong mixing conditions. a survey and some open
  questions.
\newblock {\em Probability Surveys}, 2:107--144.

\bibitem[Calafiore, 2010]{Calafiore10}
Calafiore, G. (2010).
\newblock Random convex programs.
\newblock {\em SIAM Journal on Optimization}, 20(6):3427--3464.

\bibitem[Campi and Garatti, 2008]{Campi08}
Campi, M. and Garatti, S. (2008).
\newblock The exact feasibility of randomized solutions of uncertain convex
  programs.
\newblock {\em SIAM Journal on Optimization}, 19(3):1211--1230.

\bibitem[Campi et~al., 2018]{Campi18}
Campi, M., Garatti, S., and Ramponi, F. (2018).
\newblock A general scenario theory for nonconvex optimization and decision
  making.
\newblock {\em IEEE Transactions on Automatic Control}, 63(12):4067--4078.

\bibitem[Cortes et~al., 2019]{Cortes19}
Cortes, C., Greenberg, S., and Mohri, M. (2019).
\newblock Relative deviation learning bounds and generalization with unbounded
  loss functions.
\newblock {\em Annals of Mathematics and Artificial Intelligence}, 85:45--70.

\bibitem[Cortes et~al., 2021]{Cortes21}
Cortes, C., Mohri, M., and Suresh, A. (2021).
\newblock Relative deviation margin bounds.
\newblock In {\em Proceedings of the 38th International Conference on Machine
  Learning (ICML)}, pages 2122--2131.

\bibitem[{de la Pe{\~n}a} and Gin{\'e}, 1999]{delaPena99}
{de la Pe{\~n}a}, V. and Gin{\'e}, E. (1999).
\newblock {\em Decoupling: From Dependence to Independence}.
\newblock Springer.

\bibitem[Esfahani et~al., 2014]{Esfahani14}
Esfahani, P., Sutter, T., and Lygeros, J. (2014).
\newblock Performance bounds for the scenario approach and an extension to a
  class of non-convex programs.
\newblock {\em IEEE Transactions on Automatic Control}, 60(1):46--58.

\bibitem[Faradonbeh et~al., 2018]{Faradonbeh18}
Faradonbeh, M., Tewari, A., and Michailidis, G. (2018).
\newblock Finite time identification in unstable linear systems.
\newblock {\em Automatica}, 96:342--353.

\bibitem[Greenberg and Mohri, 2014]{Greenberg14}
Greenberg, S. and Mohri, M. (2014).
\newblock Tight lower bound on the probability of a binomial exceeding its
  expectation.
\newblock {\em Statistics \& Probability Letters}, 86:91--98.

\bibitem[Hoeffding, 1963]{Hoeffding63}
Hoeffding, W. (1963).
\newblock Probability inequalities for sums of bounded random variables.
\newblock {\em Journal of the American Statistical Association},
  58(301):13--30.

\bibitem[Janson, 2004]{Jansen04}
Janson, S. (2004).
\newblock Large deviations for sums of partly dependent random variables.
\newblock {\em Random Structures and Algorithms}, 24(3):234--248.

\bibitem[Kuznetsov and Mohri, 2015]{Kuznetsov15}
Kuznetsov, V. and Mohri, M. (2015).
\newblock Learning theory and algorithms for forecasting non-stationary time
  series.
\newblock In {\em Advances in Neural Information Processing Systems},
  volume~28.

\bibitem[Lauer, 2020a]{Lauer20}
Lauer, F. (2020a).
\newblock Error bounds for piecewise smooth and switching regression.
\newblock {\em IEEE Transactions on Neural Networks and Learning Systems},
  31(4):1183--1195.

\bibitem[Lauer, 2020b]{Lauer20b}
Lauer, F. (2020b).
\newblock Risk bounds for learning multiple components with
  permutation-invariant losses.
\newblock In {\em Proc. of the 23rd International Conference on Artificial
  Intelligence and Statistics ({AISTATS})}, pages 1178--1187.

\bibitem[Lauer, 2023]{Lauer23}
Lauer, F. (2023).
\newblock Margin theory for the scenario-based approach to robust optimization
  in high dimension.
\newblock {\em arXiv preprint}, arXiv:2303.03891.

\bibitem[Lauer and Bloch, 2019]{Lauer19}
Lauer, F. and Bloch, G. (2019).
\newblock {\em Hybrid System Identification: Theory and Algorithms for Learning
  Switching Models}.
\newblock Springer.

\bibitem[Ledoux and Talagrand, 1991]{Ledoux91}
Ledoux, M. and Talagrand, M. (1991).
\newblock {\em Probability in Banach Spaces: Isoperimetry and Processes}.
\newblock Springer-Verlag, Berlin.

\bibitem[Ljung, 1987]{Ljung87}
Ljung, L. (1987).
\newblock {\em System Identification: Theory for the User}.
\newblock Prentice-Hall, Englewood Cliffs, NJ.

\bibitem[Massucci et~al., 2022]{Massucci22}
Massucci, L., Lauer, F., and Gilson, M. (2022).
\newblock A statistical learning perspective on switched linear system
  identification.
\newblock {\em Automatica}, 145:110532.

\bibitem[McDiarmid, 1989]{McDiarmid89}
McDiarmid, C. (1989).
\newblock On the method of bounded differences.
\newblock In {\em Surveys in Combinatorics}, pages 148--188. Cambridge
  University Press.

\bibitem[McDonald et~al., 2015]{Mcdonald15}
McDonald, D., Shalizi, C., and Schervish, M. (2015).
\newblock Estimating beta-mixing coefficients via histograms.
\newblock {\em Electronic Journal of Statistics}, 9:2855–2883.

\bibitem[Meir, 2000]{Meir00}
Meir, R. (2000).
\newblock Nonparametric time series prediction through adaptive model
  selection.
\newblock {\em Machine Learning}, 39:5--34.

\bibitem[Mendelson, 2002]{Mendelson02}
Mendelson, S. (2002).
\newblock Rademacher averages and phase transitions in {Glivenko-Cantelli}
  classes.
\newblock {\em IEEE Transactions on Information Theory}, 48(1):251--263.

\bibitem[Mendelson, 2014]{Mendelson14}
Mendelson, S. (2014).
\newblock Learning without concentration.
\newblock In {\em Proc. of the Conference on Learning Theory (COLT)}, pages
  25--39.

\bibitem[Mendelson, 2018]{Mendelson18}
Mendelson, S. (2018).
\newblock Learning without concentration for general loss functions.
\newblock {\em Probability Theory and Related Fields}, 171(1-2):459--502.

\bibitem[Mendelson and Vershynin, 2003]{Mendelson03}
Mendelson, S. and Vershynin, R. (2003).
\newblock Entropy and the combinatorial dimension.
\newblock {\em Inventiones mathematicae}, 152:37--55.

\bibitem[Mohri and Rostamizadeh, 2009]{Mohri09}
Mohri, M. and Rostamizadeh, A. (2009).
\newblock Rademacher complexity bounds for non-iid processes.
\newblock In {\em Advances in Neural Information Processing Systems}, pages
  1097--1104.

\bibitem[Mohri and Rostamizadeh, 2010]{Mohri10}
Mohri, M. and Rostamizadeh, A. (2010).
\newblock Stability bounds for stationary $\varphi$-mixing and $\beta$-mixing
  processes.
\newblock {\em Journal of Machine Learning Research}, 11:789--814.

\bibitem[Rakhlin et~al., 2015]{Rahklin15}
Rakhlin, A., Sridharan, K., and Tewari, A. (2015).
\newblock Sequential complexities and uniform martingale laws of large numbers.
\newblock {\em Probability Theory and Related Fields}, 161:111--153.

\bibitem[Ralaivola and Amini, 2015]{Ralaivola15}
Ralaivola, L. and Amini, M.-R. (2015).
\newblock Entropy-based concentration inequalities for dependent variables.
\newblock In {\em Proc. of the 32nd International Conference on Machine
  Learning, ({ICML}), Lille, France}, pages 2436--2444.

\bibitem[Ralaivola et~al., 2010]{Ralaivola10}
Ralaivola, L., Szafranski, M., and Stempfel, G. (2010).
\newblock Chromatic {PAC-B}ayes bounds for non-iid data: Applications to
  ranking and stationary {$\beta$}-mixing processes.
\newblock {\em Journal of Machine Learning Research}, 11:1927--1956.

\bibitem[Sauer, 1972]{Sauer72}
Sauer, N. (1972).
\newblock On the density of families of sets.
\newblock {\em Journal of Combinatorial Theory, Series A}, 13(1):145--147.

\bibitem[Simchowitz et~al., 2018]{Simchowitz18}
Simchowitz, M., Mania, H., Tu, S., Jordan, M., and Recht, B. (2018).
\newblock Learning without mixing: Towards a sharp analysis of linear system
  identification.
\newblock In {\em Proc. of the 31st Conf. On Learning Theory (COLT)}, pages
  439--473.

\bibitem[Steinwart and Christmann, 2009]{Steinwart09}
Steinwart, I. and Christmann, A. (2009).
\newblock Fast learning from non-i.i.d. observations.
\newblock In {\em Advances in Neural Information Processing Systems 22}, pages
  1768--1776.

\bibitem[Talagrand, 2014]{Talagrand14}
Talagrand, M. (2014).
\newblock {\em Upper and Lower Bounds for Stochastic Processes}.
\newblock Springer.

\bibitem[Usunier et~al., 2006]{Usunier06}
Usunier, N., Amini, M.-R., and Gallinari, P. (2006).
\newblock Generalization error bounds for classifiers trained with
  interdependent data.
\newblock In {\em Advances in Neural Information Processing Systems},
  volume~18, pages 1369--1376.

\bibitem[{van De Geer}, 2007]{vanDeGeer07}
{van De Geer}, S. (2007).
\newblock On {Hoeffding's} inequality for dependent random variables.
\newblock In Dehling, H., Mikosch, T., and S{\o}rensen, M., editors, {\em
  Empirical Process Techniques for Dependent Data}, pages 161--169. Springer.

\bibitem[Vapnik, 1998]{Vapnik98}
Vapnik, V. (1998).
\newblock {\em Statistical Learning Theory}.
\newblock John Wiley \& Sons.

\bibitem[Wang et~al., 2021]{Wang21}
Wang, Z., Berger, G., and Jungers, R. (2021).
\newblock Data-driven feedback stabilization of switched linear systems with
  probabilistic stability guarantees.
\newblock In {\em Proceedings of the 60th IEEE Conference on Decision and
  Control (CDC)}, pages 4400--4405.

\bibitem[Yu, 1994]{Yu94}
Yu, B. (1994).
\newblock Rates of convergence for empirical processes of stationary mixing
  sequences.
\newblock {\em The Annals of Probability}, pages 94--116.

\end{thebibliography}
\end{document}